\documentclass{article} 
\usepackage[preprint]{iclr2026_conference}
\usepackage{times}

\usepackage{amsmath,amsfonts,bm}

\def\eqref#1{equation~\ref{#1}}

\def\1{\bm{1}}

\DeclareMathAlphabet{\mathsfit}{\encodingdefault}{\sfdefault}{m}{sl}
\SetMathAlphabet{\mathsfit}{bold}{\encodingdefault}{\sfdefault}{bx}{n}

\usepackage[utf8]{inputenc}
\usepackage[T1]{fontenc}
\usepackage{amsmath, amssymb, amsfonts, amsthm, mathtools}
\usepackage{algorithm}
\usepackage{algpseudocode}
\usepackage{url}
\usepackage{booktabs}
\usepackage{microtype}
\usepackage{xcolor}
\setcitestyle{numbers,square}
\usepackage{hyperref}
\usepackage{graphicx}
\usepackage{caption}   
\usepackage{float}
\usepackage{pgfplots}
\pgfplotsset{compat=1.18}
\usetikzlibrary{calc}
\usepackage{comment}
\usepackage{wrapfig}
\usepackage{colortbl}
\usepackage{array}

\theoremstyle{plain}
\newtheorem{theorem}{Theorem}

\newtheorem{proposition}{Proposition}
\theoremstyle{definition}

\newtheorem{assumption}{Assumption}

\newcommand{\secv}[1]{\textcolor{sec}{#1}}
\newcommand{\pair}[2]{#1\;{\scriptsize\secv{(#2)}}}
\newcommand{\pairT}[2]{#1{\tiny\secv{(#2)}}}

\newcommand{\rponefig}{%
\begin{tikzpicture}[
  frozen/.style={draw=black!45, fill=black!5, rounded corners=2pt,
                 minimum height=5.5mm, minimum width=24mm, align=center,
                 inner sep=1.5pt, font=\footnotesize},
  learn/.style={draw=blue!55, fill=blue!12, line width=.7pt, rounded corners=2pt,
                minimum height=5.5mm, minimum width=13mm, align=center,
                inner sep=1.5pt, font=\footnotesize},
  sm/.style={draw=black!75, circle, line width=.6pt, inner sep=0pt,
             minimum size=3.8mm, font=\footnotesize},
  dot/.style={circle, fill=black!75, inner sep=0pt, minimum size=1.3mm},
  lbl/.style={font=\footnotesize, inner sep=1.5pt},
  flow/.style={line width=.5pt},
  >=latex
]
\foreach \i in {0,1}{
  \begin{scope}[yshift=-\i*29mm]
    \node[dot]    (j\i)  at (0,0)       {};
    \node[frozen] (ev\i) at (0,-6.5mm)  {roll out \& evaluate};
    \node[learn]  (f\i)  at (0,-13.5mm) {$f_\theta$};
    \node[sm]     (s\i)  at (0,-20mm)   {$+$};
    \coordinate   (r\i)  at (15mm,-20mm);
    \draw[->, flow] (j\i)  -- (ev\i);
    \draw[->, flow] (ev\i) -- (f\i);
    \draw[->, flow] (f\i)  -- (s\i);
    \draw[flow]     (j\i)  -- (15mm,0) -- (r\i);
    \draw[->, flow] (r\i)  -- (s\i);
  \end{scope}}
\draw[flow] (0,5.5mm) node[lbl, above] {$\mathbf a_0=\mathbf 0$} -- (j0);
\draw[flow] (s0) -- node[lbl, left=1.5mm] {$\mathbf a_1$} (j1);
\draw[flow] (s1) -- (0,-53mm);
\node[font=\footnotesize] at (0,-54mm) {$\vdots$};
\draw[->, flow] (0,-56.5mm) -- (0,-60mm) node[lbl, below] {$\mathbf a_K$};
\end{tikzpicture}}

\tikzset{
  swatch/.style={draw=black!45, line width=.5pt, rounded corners=0.8pt},
  rp1f/.style ={swatch, draw=blue!60, line width=.8pt, fill=blue!18},
  l2of/.style ={swatch, fill=black!35},
  dmpof/.style={swatch, fill=black!26},
  cemf/.style ={swatch, fill=black!18},
  mppif/.style={swatch, fill=black!11},
  adamf/.style={swatch, fill=black!4},
  rp1/.style ={rp1f,  bar shift=0pt},
  l2o/.style ={l2of,  bar shift=0pt},
  dmpo/.style={dmpof, bar shift=0pt},
  cem/.style ={cemf,  bar shift=0pt},
  mppi/.style={mppif, bar shift=0pt},
  adam/.style={adamf, bar shift=0pt},
}

\definecolor{sec}{gray}{0.42}

\title{Reinforced Planning with Latent World\\ Models}

\author{Armin Sommer\thanks{Main author, correspondence to \texttt{armin@pantheon.inc}.} \\
Pantheon Industries \\
\And
Jannik Schilling \\
Pantheon Industries \\
}

\begin{document}

\maketitle

\begin{abstract}
Humans solve complex problems by constructing plans and mentally simulating their outcomes with an internal model of the world. Machine learning has produced world models that similarly predict the outcomes of action sequences, but the improvement of candidate plans still isn't fully learned. Current planners are either hand-designed, distilled from a hand-designed optimizer, or learned only to inform an amortized policy rather than to revise the plan itself. We introduce Reinforced Planning, a method based on the idea that search can be learned by reinforcing good search rules into a neural planner. Our implementation RP1 learns both how to evaluate imagined outcomes through a critic, as well as how to improve multi-step plans through an optimizer trained fully offline from imagined world-model roll-outs. To our knowledge, RP1 is the first method to fully learn how to improve multi-step plans. Furthermore, it can be trained independently of and attached to any pretrained latent world model. Across visual navigation, arm reaching, and robotic manipulation on two world-model backbones, RP1 significantly outperforms hand-designed search algorithms, reaching near-perfect success in several settings while using $1,000 \times$ fewer world-model rollouts and being up to $67 \times$ faster than the strongest alternative under concurrent inference.
\end{abstract}

\section{Introduction}
\label{sec:intro}

Humans are commonly understood to solve complex problems by imagining possible
futures and evaluating their consequences. Hippocampal activity can represent
prospective trajectories before an action is taken
\citep{johnson2007neural,pfeiffer2013hippocampal}, supporting the view that the
brain uses a learned cognitive map for internal simulation
\citep{tolman1948cognitive}. Computationally, this separates planning into two
components: a \emph{world model} predicts the consequences of hypothetical
actions, while a \emph{planner} determines how candidate action sequences are
generated, evaluated, and improved.

Machine learning has made substantial progress on the first component. Latent
world models now support high-dimensional visual prediction
\citep{ha2018world,hafner2019planet,hafner2023dreamerv3}, self-supervised
predictive representations \citep{lecun2022path,assran2023ijepa}, and planning
with pretrained, reward-free models
\citep{zhou2024dinowm,sobal2025pldm,maes2026lewm}. Yet the planner operating on
top of these models is still usually hand-designed. Given a candidate action
sequence, the world model can predict its outcome, but it does not specify how
that sequence should be changed to produce a better one. Existing systems
therefore rely on fixed search rules, often requiring thousands of world-model
evaluations per decision and configurations that must be chosen separately for
different tasks and models
\citep{hafner2019planet,hansen2024tdmpc2,zhou2024dinowm,
sobal2025pldm,sv2023gradientplanning}. 

While planning has been
explored in several forms, learning the update rule in model-based multi-step planning has not been achieved so far: the planning rule is either fixed or
inherited from a conventional optimizer, learned through online interaction,
or applied only to the next action rather than to an entire plan
(Sec.~\ref{sec:background}).

We introduce the \textbf{Reinforced Planning} method, and its first implementation RP1, which learns both how
imagined outcomes should be evaluated and how multi-step plans should be
improved. RP1 learns a goal-conditioned quasimetric critic
\citep{liu2022metric,wang2023quasimetric} from offline trajectories using
temporal-difference learning, then trains a neural planner to repeatedly improve action-plans at inference time, by reinforcing good planning rules into the network weights. At each refinement step, RP1 receives the current action-plan, and evaluates its outcome via world-model rollouts. No conventional optimizer is executed inside this update or used as a
training target. To our knowledge, RP1 is the first model-based planner to
fully learn an update rule over a multi-step action plan (Sec.~\ref{sec:background}).

We evaluate RP1 with two pretrained world-model backbones, LeWorldModel and
PLDM, across visual navigation (TwoRoom), continuous-control reaching
(Reacher), and contact-rich manipulation (OGBench Cube). Across these three domains, RP1 exceeds the strongest
existing planners while using only $9$ world-model rollouts per decision,
compared with $9{,}000$ for the strongest competitor method, and reduces
planning latency by up to $67\times$ when multiple control loops share one
GPU.

\section{Background}
\label{sec:background}

\paragraph{Fixed or inherited planning rules.}
Most model-based planners use a hand-designed update rule: CEM in PlaNet and
DINO-WM, MPPI in the TD-MPC family, or gradient descent through differentiable
world-model rollouts
\citep{hafner2019planet,hansen2024tdmpc2,zhou2024dinowm,
sobal2025pldm,sv2023gradientplanning}. Universal Planning Networks
optimize multi-step action sequences, but fix gradient descent as the plan optimizer \citep{srinivas2018universal}. DMPO retains an MPPI update and shift operation and learns modifications to them from online task return \citep{sacks2024deep}. L2O-MPC learns the runtime update, but only by
imitating a higher-budget MPPI expert that must be run during training
\citep{sacks2022learning}. In all these, the planning rules are either (partly)
hand-designed or learned from a hand-designed optimizer.

\paragraph{Amortized model-based control.}
The Dreamer methods use their world model to train an amortized policy, but do not plan through the world model at inference time \citep{dreamerv1}. Diffuser learns a generative model over state--action trajectories that jointly captures dynamics and planning, refining trajectories directly through denoising rather than explicitly rolling out and evaluating successive candidate action plans through a separate world model \citep{janner2022planningdiffusionflexiblebehavior}.

\paragraph{Planning to inform amortized policies.}
The Imagination Based Planner (IBP) and Thinker methods learn which imagined trajectories to construct or inspect, but do not iteratively improve a candidate plan. Instead, the information gathered through imagination conditions the agent's amortized action policy
\citep{pascanu2017learning,chung2024thinker}. Their learned planning
behaviour therefore serves to improve fixed action distributions, rather than improving the candidate plan itself, and necessitates online learning.

\paragraph{Iterative next-action optimization.}
Iterative Amortized Policy Optimization (IAPO) instead learns an iterative optimizer for the current-state action
distribution $\pi(a_t\mid s_t)$ \citep{marino2021iterative}. Even in its
model-based variant, future terms in a world-model rollout remain amortized policy
outputs rather than jointly optimized decision variables. The learned optimization therefore remains one-step improvement, rather than learning an update rule over a multi-step plan.

\paragraph{Objectives for imagined plans.}
JEPA-based world model literature tends to score imagined outcomes
by their Euclidean distance to the goal in latent space\cite{maes2026lewm,sobal2025pldm,zhou2024dinowm}. This choice has a
biological analogy in grid-cell representations, which have been argued to
provide a spatial metric for vector-based navigation
\citep{hafting2005grid,banino2018vector}. However, latent proximity need not reflect
temporal reachability (Theorem~\ref{thm:latent-geometry}), with recent work showing that learned reachability objectives can outperform latent
distance
\citep{li2026euclideanproximityrepairinglatent}. Evaluating outcomes
with learned value functions is also consistent with evidence implicating the
orbitofrontal and ventromedial prefrontal cortex in prospective value
evaluation
\citep{padoaschioppa2006neurons,wilson2014orbitofrontal,schuck2016human}, and
is well established in model-based control
\citep{hansen2022tdmpc,hansen2024tdmpc2}. We therefore estimate temporal
cost-to-go from experience using a goal-conditioned quasimetric-style critic
\citep{kaelbling1993learning,schaul2015universal,andrychowicz2017her,
hartikainen2020dynamical,liu2022metric,wang2023quasimetric,wang2023optimal}.

\section{Preliminaries}
\label{sec:prelim}

We consider a goal-conditioned MDP
$(\mathcal S, \mathcal A, \mathcal T, g, \rho_0)$ with
state space $\mathcal S$, action space
$\mathcal A \subset \mathbb R^{|a|}$, goal state $g\subseteq \mathcal
S$, transition function $\mathcal T: \mathcal S \times \mathcal A \to \mathcal S$ and
initial-state distribution $\rho_0$. The agent does not observe $s$ directly, but instead receives an observation $o$ in the form of a visual image.

\paragraph{World model.}
A world model predicts the next state, given the current state and some candidate action. Specifically, the neural network maps an observation $o_t$ to its latent representation $z_t$ through an encoder
$z_t = E_\phi(o_{t}) \in
\mathcal Z$. The world model then predicts the next latent state via a prediction map $h_\phi (z, a) = \hat z$, where $\hat z$ is the predicted (or \textit{imagined})
latent of the next state. We define the $N$-step rollout operator of the world model via
\begin{align}
  H_\phi(\mathbf a, \hat z_0)
  &:= h_\phi\bigl( h_\phi( \cdots h_\phi(\hat z_0, a_0) \cdots, a_{N-2}),\,
      a_{N-1} \bigr)
  \;=\; \hat z_N,
  \label{eq:rollout}
\end{align}
a composition of $N$ forward rolls of the world model.

\paragraph{Model-based Planning.}
Given a start latent $z_t$ and encoded goal $z_g$, a plan is scored by a
terminal cost $C$ applied to the final latent state, as predicted by the world
model $H_\phi$. This plan aims to optimize the objective
\begin{equation}
  J(\mathbf a; z_t, z_g)
  \;:=\; C\bigl( H_\phi(\mathbf a, z_t),\, z_g \bigr).
  \label{eq:plan-objective}
\end{equation}
A model-based planner is a search procedure over
action sequences: it holds candidate plans and queries the world model $H_\phi$ to
evaluate them under $J$, and applies an update rule
\begin{align}F:\mathbf a_k\mapsto\mathbf a_{k+1}
\end{align}
for $K$ rounds. Model-based planners differ only in their instantiation of $F$. A policy $\pi(a\mid z_t,z_g)$, by
contrast, instead amortizes the objective ~\ref{eq:plan-objective} into a direct state-to-action
mapping and performs no planning.

\section{Reinforced Planning}
Our method follows an actor-critic architecture: a \emph{critic} scores latent
states with respect to a goal, and an actor, here a \emph{learned
planner}, optimizes the action plan against the critic's final state estimate.

\paragraph{Critic.}
The critic is a goal-conditioned value function $V_\psi({z}_t, {z}_g)$ that estimates the cost-to-go from latent state ${z_t}$ to an encoded goal state ${z}_g$. Here, lower values correspond to fewer steps to goal and thus signify occupancy of better states. We learn this critic via offline temporal-difference (TD) learning, although it could in theory be any cost-function. During planning, the critic only ever evaluates
terminal states $\hat{z}_N$ produced by the rollout operator.

\paragraph{Planner.}
The planner is a learned operator
\begin{equation}
\mathcal F_\theta:
\left(
\underbrace{\mathbf a_k}_{\text{current plan}},
\underbrace{v_k}_{\text{terminal value}},
\underbrace{\mathbf g_k}_{\text{value gradient}}
\right)
\longmapsto
\underbrace{\mathbf a_{k+1}}_{\text{improved plan}}.
\end{equation}
parametrized by $\theta$, that outputs an improved plan from the current plan, the critic's value at
the plan's terminal state, and the plan's value gradient. Starting from an initial plan $\mathbf{a}_0$, planning does three things per step:
\begin{align}
  \text{\emph{roll out:}} \quad
    & \hat{z}_N^{(k)} = H_\phi\bigl(\mathbf{a}_k, {z}_t\bigr),
    \label{eq:plan-rollout}\\[2pt]
  \text{\emph{evaluate:}} \quad
    & v_k = V_\psi\bigl(\hat{z}_N^{(k)}, {z}_g\bigr),
    \qquad
    \mathbf{g}_k = \nabla_{\mathbf{a}_k} V_\psi\bigl(\hat{z}_N^{(k)}, {z}_g\bigr),
    \label{eq:plan-evaluate}\\[2pt]
  \text{\emph{improve:}} \quad
    & \mathbf{a}_{k+1} =  \mathcal{F}_\theta\bigl(\mathbf{a}_k,\, v_k,\, \mathbf{g}_k\bigr).
    \label{eq:plan-refine}
\end{align}
The optimized plan is the final iteration, $\mathbf{a}^\star = \mathbf{a}_K$.
While the planner has access to the value and gradient, it is not constrained to follow the plan's gradient \(-\mathbf g_k\) and can learn when to trust and distrust it. Task-specific information reaches the planner only through \(v_k\) and \(\mathbf g_k\), forcing it to learn a plan-update rule rather than a direct state-and-goal-to-action mapping.

\paragraph{Reinforcing good planning rules.}
Applying \(\mathcal F_\theta\) produces an imagined optimization trajectory
\begin{align}
\mathbf a_0
\xrightarrow{\mathcal F_\theta}
\mathbf a_1
\xrightarrow{\mathcal F_\theta}
\cdots
\xrightarrow{\mathcal F_\theta}
\mathbf a_K,
\end{align}
where each new plan is obtained by applying the same learned update rule to the preceding plan. The planner is trained to minimize the terminal cost-to-go predicted by the frozen world model \(H_\phi\) and value function $V_\psi$:
\begin{align}
\theta^\star
=
\arg\min_\theta
\mathbb E_{(z_0,z_g)\sim\mathcal D}
\left[
V_\psi\!\left(H_\phi(\mathbf a_K,z_t),z_g\right)
\right] + \mathcal C,
\end{align}
where $\mathcal C$ is some regularizer on intermediate plans' value. Updates that produce lower-cost imagined plans reduce the optimization objective and are reinforced in the shared parameters of $\mathcal F_\theta$, while updates that produce higher-cost plans are suppressed. The planner therefore learns rules to improve action sequences, not the action sequences themselves.






\section{Implementation}

\noindent
\begin{minipage}[t]{0.72\textwidth}
\vspace{0pt}
As a first realization of a Reinforced Planner, we implement a residual version we call \emph{RP1}. Starting from $\mathbf{a}_0 = \mathbf {0}$, the planner optimizes the action trajectory via
\begin{equation}
\begin{aligned}
  \mathcal F_\theta&\bigl(\mathbf{a}_k,\, v_k,\, \mathbf{g}_k\bigr) \\[2pt]
  &= \operatorname{clip}_{[-a_{\max},a_{\max}]}\!\Bigl(
      \underbrace{\mathbf{a}_k}_{\text{prev.\ plan}}+
      \underbrace{f_\theta\bigl(\mathbf{a}_k,\, v_k,\, \mathbf{g}_k\bigr)}_{\text{residual } \Delta\mathbf{a}_k}
    \Bigr),
\end{aligned}
  \label{eq:residual-update}
\end{equation}
where $f_\theta \colon \mathcal{A}^{N} \times \mathbb{R} \times \mathbb{R}^{N \times |a|} \to \mathbb{R}^{N \times |a|}$
is a neural network producing the plan change
$\Delta\mathbf{a}_k = f_\theta(\mathbf{a}_k, v_k, \mathbf{g}_k)$.
\par
\medskip
\noindent
This residual update $\Delta\mathbf{a}_k$ onto the previous plan $\mathbf{a}_k$ keeps the gradient flow stable to avoid the vanishing gradient problem\cite{Hochreiter1991,resnet}. The clip is a projection of each plan iterate onto the box $[-a_{\max},a_{\max}]$ (an $\ell_\infty$ constraint on the action trajectory, not on the update), bounding actions to $a_{\max}$ standard deviations of the demonstrated distribution so that rollouts stay on the world model's support. We use open-loop planning for our experiments.
\par
\medskip
\noindent
We realize the critic as a metric residual network~\cite{liu2023metric} trained with Implicit Q-Learning through Hindsight Experience Replay\cite{kostrikov2022offline,andrychowicz2017hindsight}. For specifics see Appendix~\ref{app:method}.
\end{minipage}\hfill
\begin{minipage}[t]{0.24\textwidth}
\vspace{0pt}
\centering
\captionsetup{type=figure,font=small,skip=4pt}
\rponefig
\caption{\textbf{RP1 visual.}}
\label{fig:rp1-residual}
\end{minipage}

\section{Theoretical Results}
\label{sec:optimality}

We formalize two motivations for Reinforced Planning. First, predictive
world-model learning does not determine a Euclidean latent geometry suitable
for planning without additional training incentives. Second, under any fixed
information interface, a learned neural planner can adapt its update rule
across tasks, whereas a conventional optimizer uses one fixed configuration
throughout the task distribution.

\subsection{Latent Norms and Cost-to-go}

Recall the encoder \(E_\phi\) and latent transition model \(h_\phi\) from
Section~\ref{sec:prelim}. For notational simplicity, we write \(E_\phi(s)\)
for the encoding of the observation generated by state \(s\). Let
\(\mathcal T:\mathcal S\times\mathcal A\to\mathcal S\) denote deterministic
environment dynamics. We call the latent world model exact when
\begin{equation}
h_\phi(E_\phi(s),a)
=
E_\phi(\mathcal T(s,a))
\qquad
\text{for all }(s,a)\in\mathcal S\times\mathcal A.
\label{eq:exact-world-model}
\end{equation}

\begin{theorem}[Prediction does not identify Euclidean latent geometry]
\label{thm:latent-geometry}
Suppose \((E_\phi,h_\phi)\) is exact. If the latent displacements from some
state \(s\) to two goals \(g_1\) and \(g_2\) are linearly independent, then
there exist two equally exact latent reparameterizations that reverse which
goal is closer to \(s\) under Euclidean distance. The ratio between the two
distances can be made arbitrarily large.
\end{theorem}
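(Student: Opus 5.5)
The idea is to reparameterize the latent space by an invertible linear map $A$ and transport the dynamics along with it. For any invertible $A\in\mathbb R^{d\times d}$, I would define
\[
E^A := A\circ E_\phi,\qquad h^A(z,a) := A\,h_\phi(A^{-1}z,a).
\]
This model is again exact, since
\[
h^A(E^A(s),a)=A\,h_\phi(E_\phi(s),a)=A\,E_\phi(\mathcal T(s,a))=E^A(\mathcal T(s,a)).
\]
The reparameterized model therefore makes exactly the same predictions, up to the relabeling $A$, and no prediction loss can distinguish it from the original. In particular, exactness constrains the latent geometry only up to such maps. The linear case already suffices, so I would not need general diffeomorphisms.

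Next, write $u_i := E_\phi(g_i)-E_\phi(s)$ for $i=1,2$. These are linearly independent by hypothesis. Under $E^A$ the displacements become $Au_1$ and $Au_2$, so the Euclidean distances from $s$ to $g_i$ are $\|Au_i\|$. It remains to choose $A$ freely on $\mathrm{span}\{u_1,u_2\}$. Extend $u_1,u_2$ to a basis $u_1,u_2,w_3,\dots,w_d$ of $\mathbb R^d$; if $d=2$ there is nothing to extend. For $\lambda>0$, let $A_\lambda$ be the unique linear map with
\[
A_\lambda u_1 = e_1,\qquad A_\lambda u_2 = \lambda e_2,\qquad A_\lambda w_j = e_j .
\]
It is invertible because it sends a basis to a basis.

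This gives $\|A_\lambda u_1\|=1$ and $\|A_\lambda u_2\|=\lambda$. Taking $\lambda=R$ makes $g_1$ closer by a factor $R$. Taking $\lambda=1/R$, or equivalently swapping the roles of the two goals, makes $g_2$ closer by a factor $R$. The two reparameterizations $A_R$ and $A_{1/R}$ are both exact by the first step and reverse the ordering. Since $R$ is arbitrary, the distance ratio, within each reparameterization, is unbounded.

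The only genuinely delicate point is where linear independence enters. If $u_2=cu_1$, then every linear $A$ preserves the ratio $\|Au_2\|/\|Au_1\|=|c|$, so no reversal is possible. Independence is exactly what allows the two displacements to be mapped to independent basis directions with prescribed lengths. A secondary check is that the theorem's notion of ``equally exact'' refers to the exactness condition \eqref{eq:exact-world-model} for the transported pair $(E^A,h^A)$, not for the original $h_\phi$. If instead one wanted a reparameterization arbitrarily close to the original, I would compose with the original rather than replace it, but the statement does not require this.
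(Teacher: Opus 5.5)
Your proposal is correct and follows the paper's proof essentially step for step. You conjugate $(E_\phi,h_\phi)$ by an invertible linear map to preserve exactness, then send the two goal displacements to independent coordinate axes and stretch one of them by $R$; your $A_R$ and $A_{1/R}$ play the same role as the paper's $\operatorname{diag}(R,1,\ldots,1)B$ and $\operatorname{diag}(1,R,1,\ldots,1)B$.
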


\begin{proof}[Proof sketch]
Any invertible linear change of latent coordinates can be absorbed into both
the encoder and transition model without changing predictive exactness. By
mapping the two goal displacements to separate coordinate axes and stretching
either axis, either goal can be made arbitrarily farther than the other. The
full proof is given in Appendix~\ref{app:latent-geometry}.
\end{proof}

Theorem~\ref{thm:latent-geometry} does not imply that latent distance is
necessarily a poor planning objective. Rather, it shows that predictive
accuracy alone cannot determine whether it is a good one: two equally
predictive world models can rank the same candidate goals in opposite orders.
Agreement between latent distance and temporal cost-to-go is therefore an
additional property that must be learned or imposed separately. We learn this
property through a goal-conditioned critic trained directly from temporal
transitions.

This non-identifiability holds even when the environment is reversible and
temporal reachability is symmetric. Appendix~\ref{app:theory-quasimetric}
gives the complementary result that temporal reachability can additionally be
asymmetric, in which case no symmetric latent norm can represent it exactly.

\subsection{Advantage of Learned Planning under Task Heterogeneity}

For a fixed world model, action space, planning horizon, and objective, let
\(x=(z_t,z_g)\sim\mu\) denote a planning task. A planner state
\(\omega_k\in\Omega_{\mathrm{pl}}\) contains all information carried from one
refinement round to the next, including the current candidate plans and any
optimizer memory. At each round, task-dependent information is exposed
through a fixed interface \(\mathcal I\). Starting from a shared initialization
\(\omega_0\), an update rule \(F\) is applied for \(K\) rounds:
\begin{equation}
\omega_{F,0}(x)
=
\omega_0,
\qquad
\omega_{F,k+1}(x)
=
F\!\left(
\omega_{F,k}(x),
\mathcal I\bigl(x,\omega_{F,k}(x)\bigr)
\right).
\label{eq:abstract-planner-update}
\end{equation}
Its expected loss is
\begin{equation}
\mathcal L(F)
=
\mathbb E_{x\sim\mu}
\left[
\ell\bigl(x,\omega_{F,K}(x)\bigr)
\right],
\label{eq:planner-loss}
\end{equation}
where \(\ell\) is the cost of the plan returned from the final planner state.

Let \(\mathcal W\) denote the compact set of feasible planner inputs, and let
\(\mathfrak F_{\mathcal I}\) denote the continuous feasible update rules
\(F:\mathcal W\to\Omega_{\mathrm{pl}}\). The precise ambient spaces and
regularity conditions are given in Appendix~\ref{app:heterogeneity}.

\begin{assumption}[Universal search-rule approximation]
\label{as:planner-universality}
Assume the neural-planner class
\(\{\mathcal F_\theta:\theta\in\Theta\}\subseteq\mathfrak F_{\mathcal I}\)
can uniformly approximate every rule in \(\mathfrak F_{\mathcal I}\):
for every \(F\in\mathfrak F_{\mathcal I}\) and every \(\varepsilon>0\),
there exists \(\theta\in\Theta\) such that
\begin{equation}
\sup_{w\in\mathcal W}
\left\|
\mathcal F_\theta(w)-F(w)
\right\|
<
\varepsilon.
\end{equation}
\end{assumption}

Now let \(\mathcal X_1,\ldots,\mathcal X_r\) be a measurable partition of the
task distribution, with
$
p_i
=
\Pr(x\in\mathcal X_i)
>
0.$
For any update rule \(F\), define its regional loss by
\begin{equation}
\mathcal L_i(F)
=
\mathbb E
\left[
\ell\bigl(x,\omega_{F,K}(x)\bigr)
\,\middle|\,
x\in\mathcal X_i
\right].
\end{equation}

Let \(\mathcal B\) be a family of fixed search configurations. Each
\(B\in\mathcal B\) induces an update rule \(F_B\), and the same configuration
is used on every task. We write
\begin{equation}
\mathcal L_i(B)
=
\mathcal L_i(F_B),
\qquad
\mathcal L(B)
=
\sum_{i=1}^r p_i\mathcal L_i(B).
\end{equation}

\begin{theorem}[Strict advantage under task heterogeneity]
\label{thm:fixed-config-superiority}
Under Assumption~\ref{as:planner-universality} and the regularity, regional
incompatibility, and interface-composability conditions stated in
Appendix~\ref{app:heterogeneity},
\begin{equation}
\inf_{\theta\in\Theta}
\mathcal L(\mathcal F_\theta)
\leq
\sum_{i=1}^r
p_i
\min_{B\in\mathcal B}
\mathcal L_i(B)
<
\min_{B\in\mathcal B}
\mathcal L(B).
\end{equation}
Thus, a sufficiently expressive learned planner can strictly outperform every
single fixed search configuration by adapting its update behavior across task
regions through the shared interface.
\end{theorem}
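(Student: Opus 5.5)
The plan is to prove the two inequalities separately. The strict inequality on the right is essentially the regional-incompatibility hypothesis. The weak inequality on the left comes from building a single continuous rule that imitates the regionally best configuration in each region, and then approximating that rule with $\mathcal F_\theta$ via Assumption~\ref{as:planner-universality}.

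\emph{Step 1: the right inequality.} For each region $i$, let $B_i^\star\in\arg\min_{B}\mathcal L_i(B)$. I take existence of these minimizers from the regularity conditions, for example compactness of $\mathcal B$ and continuity of $B\mapsto\mathcal L_i(B)$. For any fixed $B$ we have $\mathcal L(B)=\sum_i p_i\mathcal L_i(B)\ge\sum_i p_i\mathcal L_i(B_i^\star)$. Equality would force $\mathcal L_i(B)=\mathcal L_i(B_i^\star)$ for every $i$, because each $p_i>0$. Regional incompatibility is precisely the statement that no single $B$ is simultaneously optimal on all regions, so the inequality is strict for every $B$. Since $\min_B\mathcal L(B)$ is attained, the strictness survives taking the minimum over $B$.

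\emph{Step 2: a composed rule.} I would construct $F^\star\in\mathfrak F_{\mathcal I}$ that behaves like $F_{B_i^\star}$ on tasks in $\mathcal X_i$. The natural candidate is $F^\star(w)=\sum_i \chi_i(w)\,F_{B_i^\star}(w)$, where the $\chi_i$ are continuous weights computed from the interface. The interface-composability condition is what should provide this: the interface output reveals the region, at least up to a set of small measure, through quantities continuous on the compact set $\mathcal W$, and the planner state can carry the optimizer memories of all $B_i^\star$ in parallel. If the regions are separated in interface space, Urysohn or Tietze gives exact continuous indicators. Otherwise I would use continuous approximations that agree with the true indicators off a set of measure at most $\delta$, and then let $\delta\to0$ using bounded $\ell$. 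In either case the result is $\mathcal L(F^\star)\le\sum_i p_i\mathcal L_i(B_i^\star)+\eta$ for arbitrary $\eta>0$.

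\emph{Step 3: neural approximation.} By Assumption~\ref{as:planner-universality}, choose $\theta$ with $\sup_{\mathcal W}\|\mathcal F_\theta-F^\star\|<\varepsilon$. I would then bound the deviation of the $K$-step iterates by induction on $k$. On the compact set the relevant maps are uniformly continuous: $F^\star$, the interface $\mathcal I$, and $\ell$. This gives $\sup_x\|\omega_{\mathcal F_\theta,K}(x)-\omega_{F^\star,K}(x)\|\to0$ as $\varepsilon\to0$, and consequently $\mathcal L(\mathcal F_\theta)\to\mathcal L(F^\star)$ by bounded convergence. One technicality is that the perturbed iterates must stay inside $\mathcal W$; I would assume this or ensure it by projection. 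Taking the infimum over $\theta$ and then $\eta\to0$ yields the left inequality.

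\emph{Main obstacle.} I expect Step 2 to be the hardest. Turning region-dependent behaviour into a single continuous rule depends entirely on how the interface-composability condition is formalized. If the interface does not separate the regions continuously, the weak inequality can fail, so the argument has to lean on that appendix condition rather than derive it. The iterate-stability argument in Step 3 is routine by comparison.
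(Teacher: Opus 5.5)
Your proposal matches the paper's proof. The strict inequality comes from regional incompatibility together with compactness of $\mathcal B$ and continuity of each $\mathcal L_i$. The weak inequality comes from uniformly approximating a composite rule $F^\star$ by $\mathcal F_\theta$ and carrying the error through the $K$ rounds by induction, using uniform continuity of $\mathcal I$, $F^\star$, and $\ell$ on compact sets.

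Your Step~2 collapses to a single line. The paper formalizes interface composability directly as the existence of a continuous $F^\star\in\mathfrak F_{\mathcal I}$ with $\mathcal L_i(F^\star)=b_i:=\min_{B\in\mathcal B}\mathcal L_i(B)$ for every $i$. No partition-of-unity or separation argument is needed. This is just as well: as sketched, your construction would need trajectory-level separation hypotheses, and the ``parallel memories'' idea does not fit a fixed $\Omega_{\mathrm{pl}}$.

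The feasibility technicality you raise in Step~3 is likewise settled by assumption. Assumption~\ref{as:planner-universality} places every $\mathcal F_\theta$ in $\mathfrak F_{\mathcal I}$, so each $\mathcal F_\theta$ maps $\mathcal W$ into $\Omega_{\mathrm{pl}}$.
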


Note that Theorem~\ref{thm:fixed-config-superiority} is an idealized matched-interface
expressivity result. It identifies an advantage available to a sufficiently
expressive learned update rule under the stated assumptions; it does not
establish that the finite RP1 architecture contains the resulting rule, that
training finds it, or that the empirical planners satisfy the theorem's
deterministic, continuous, and matched-interface setup.

\section{Experiments}
\label{sec:experiments}
\vspace{-0.6em}

\begin{figure}[H]
  \centering
  \includegraphics[
    width=0.7\linewidth,
    trim=0 0 0 10,
    clip
  ]{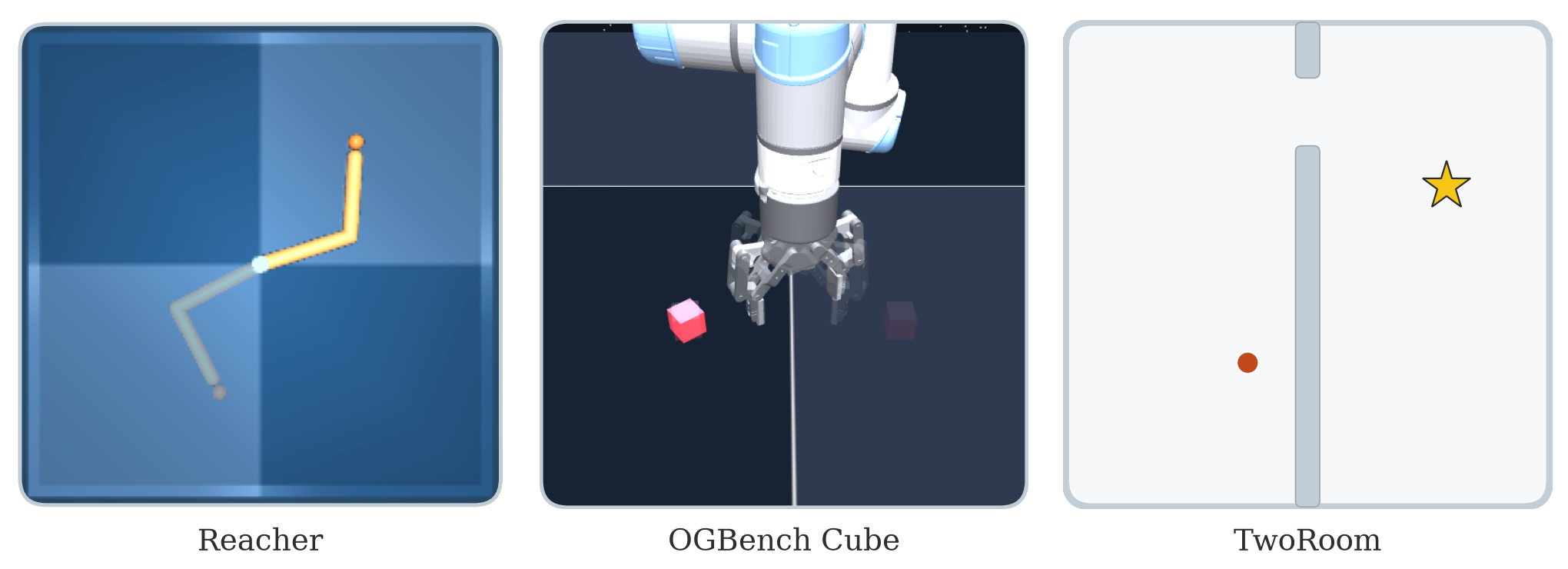}
  \vspace{-0.6em}
  \caption{\textbf{Experiment environments.} In Reacher (left) the agent moves a two-link arm to a goal configuration, here shaded. In OGBench Cube (middle) a robot arm picks up a cube and moves it to a goal position, also shaded. In TwoRoom (right) an agent navigates to a goal position, marked by a star.}
  \label{fig:environments}
\end{figure}

\paragraph{Evaluation design.} We evaluate RP1 in three visual-control domains, TwoRoom, Reacher, and OGBench Cube, on two world-model bases: LeWorldModel (LeWM) and PLDM. All world-model encoders and dynamics predictors remain frozen during critic and planner training, so differences in performance arise from how imagined trajectories are scored and improved rather than from changes to the world models. We use benchmarks from the StableWorldModel environment \cite{maes_lld2026swm}.

\paragraph{Baselines and controlled comparisons.} We compare RP1 against three popular hand-designed planning algorithms and two partly-learned hybrid planners. The hand-designed planners constitute the state of the art for planning with pretrained world models: essentially all recent latent-planning systems use one of them or a close variant \citep{hafner2019planet,hansen2024tdmpc2,zhou2024dinowm,sobal2025pldm,sv2023gradientplanning}. We evaluate each with both latent distance $\lVert \hat z_N-z_g\rVert_2^2$ and the learned objective $V(\hat z_N,z_g)$. The hybrid planners DMPO and L2O-MPC use a learned critic, following their original design \cite{sacks2024deep, sacks2022learning}.

\paragraph{Evaluation metrics.} We report task success and planning compute cost, measured as the number of world-model rollouts per decision. All planners use identical action chunking: each planned action comprises five primitive actions, and each planner optimizes a sequence of five such chunks, corresponding to a horizon of 25 primitive actions. All methods therefore plan over the same horizon in the same normalized action space. Full evaluation details are provided in Appendix~\ref{app:results}.

\subsection{TwoRoom}
\label{sec:tworoom}

TwoRoom tests whether model-based agents are capable of appropriate planning when geometric proximity differs from temporal reachability. The agent must pass through a doorway to reach the opposite room, so states that are geometrically close across the wall might still require a long detour.

\begin{figure}[H]
  \centering

  \makebox[\linewidth][c]{%
    \begin{minipage}[c]{0.04\linewidth}
      \centering\small (a)
    \end{minipage}%
    \hspace{4pt}%
    \begin{minipage}[c]{0.75\linewidth}
      \includegraphics[width=\linewidth]{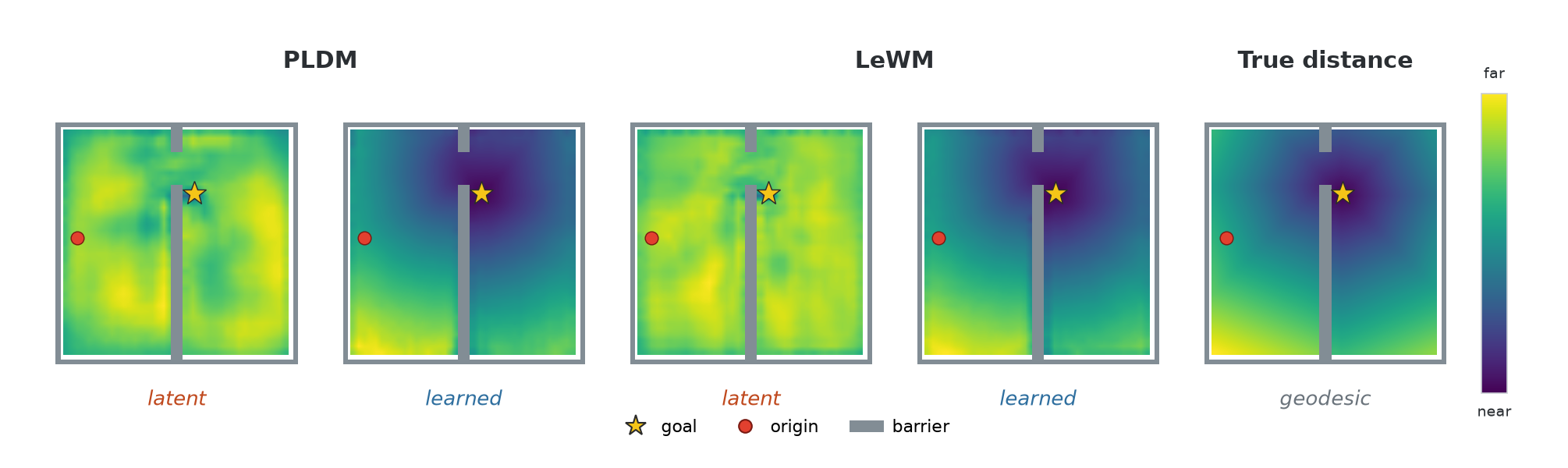}
    \end{minipage}%
  }

  \vspace{8pt}

  \makebox[\linewidth][c]{%
    \begin{minipage}[c]{0.04\linewidth}
      \centering\small (b)
    \end{minipage}%
    \hspace{4pt}%
    \begin{minipage}[c]{0.75\linewidth}
      \includegraphics[width=\linewidth]{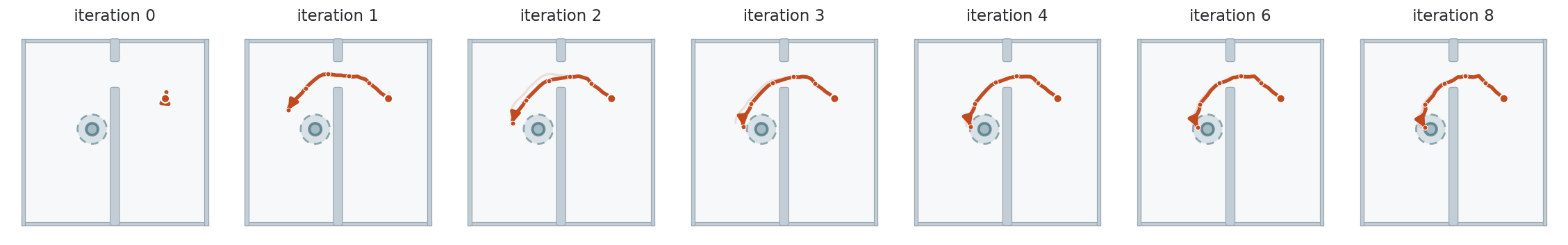}
    \end{minipage}%
  }

  \caption{\textbf{RP1 in TwoRoom.}
  (a) The learned critic better captures temporal cost-to-go than latent
  $L_2$ distance. (b) RP1 iteratively refines its plan, with later updates
  focusing on fine corrections to the final actions.
  Additional visualizations are provided in Appendix~\ref{app:results-tworoom}.}
  \label{fig:tworoom-analysis}
\end{figure}

 We find that the latent-distance objective does not capture distance-to-goal in the queried world-models, whereas a learned value function saturates the benchmark across implemented planners. Once planners are given the learned value critic, their performance largely converges: most methods reach near-saturated success, despite using very different search rules and compute budgets. This suggests that in TwoRoom the dominant difficulty is not how candidate plans are improved, but whether they are evaluated with an objective that reflects temporal reachability rather than latent proximity.

\begin{table}[H]
  \centering
  \captionsetup{font=small,skip=3pt}
  \caption*{\textbf{TwoRoom}}
  \label{tab:tworoom}
  \footnotesize
  \renewcommand{\arraystretch}{1.1}
  \begin{tabular*}{\linewidth}{@{\extracolsep{\fill}}lrcccc@{}}
  \toprule
  & & \multicolumn{2}{c}{LeWM} & \multicolumn{2}{c}{PLDM}\\
  \cmidrule(lr){3-4}\cmidrule(l){5-6}
  planner & rollouts & $25$ steps & $100$ steps & $25$ steps & $100$ steps\\
  \midrule
  \multicolumn{6}{@{}l}{\itshape latent $L_2$ \;{\scriptsize\secv{(value critic)}}}\\
  \addlinespace[1pt]
  CEM  & $9000$ & \pair{84.0}{\textbf{100.0}} & \pair{13.3}{\textbf{94.7}}
                & \pair{93.3}{\textbf{100.0}} & \pair{52.0}{89.3}\\
  MPPI & $9000$ & \pair{70.7}{87.3} & \pair{20.0}{64.0} & \pair{64.0}{78.7} & \pair{33.3}{58.7}\\
  Adam & $3000$ & \pair{94.7}{96.7} & \pair{24.0}{83.3} & \pair{90.7}{96.0} & \pair{42.0}{73.3}\\
  \addlinespace[2pt]\midrule\addlinespace[1pt]
  Offline-DMPO    & $256$ & 96.9 & \textbf{100.0} & 97.8 & 92.7\\
  L2O-MPC & $256$ & 95.8 & 90.9 & 95.1 & 48.2\\
  \addlinespace[2pt]\midrule\addlinespace[1pt]
  \textbf{RP1 (ours)} & $9$ & \textbf{100.0} & 94.2 & \textbf{98.2} & \textbf{96.0}\\
  \bottomrule
  \end{tabular*}
  \vspace{3pt}
  \caption{\centering{\textbf{Success rate (\%).} Bold marks the best two entries per column. For CEM, MPPI, and Adam, the main number uses latent $L_2$ while the gray parenthesized number uses the learned value critic.}}
\end{table}

\subsection{Reacher}
\label{sec:highdim}

Reacher is a two-link arm under torque control, observed only as
visual frames. The task is to bring both joints into a target
configuration. Success follows the benchmark's first-hit convention at a
loose and a tight tolerance ($\tau{=}0.1$ and $\tau{=}0.05$\,rad). As in all
domains, planners optimize five blocks of five primitive actions, so the
planning horizon exactly covers the nominal $25$-step distance to the goal.

Reacher complements TwoRoom by removing the objective as a confound: the arm
moves in free space, meets no obstacles, and every configuration is
reachable from every other, so geometric proximity and temporal
reachability essentially coincide. Empirically, latent $L_2$ distance is
already an adequate surrogate for cost-to-go, and substituting the learned
critic barely moves any baseline (Table~\ref{tab:reacher}). Whatever
separates the planners in this domain must therefore come from how plans
are improved, not from how they are scored.

\begin{table}[H]
  \centering
  \captionsetup{font=small,skip=3pt}
  \caption{\textbf{Reacher}}
  \label{tab:reacher}
  \footnotesize
  \renewcommand{\arraystretch}{1.1}
  \begin{tabular*}{\linewidth}{@{\extracolsep{\fill}}lrcccc@{}}
  \toprule
  & & \multicolumn{2}{c}{LeWM} & \multicolumn{2}{c}{PLDM}\\
  \cmidrule(lr){3-4}\cmidrule(l){5-6}
  planner & rollouts & $\tau{=}.1$ & $\tau{=}.05$ & $\tau{=}.1$ & $\tau{=}.05$\\
  \midrule
  \multicolumn{6}{@{}l}{\itshape latent $L_2$ \;{\scriptsize\secv{(value critic)}}}\\
  \addlinespace[1pt]
  CEM  & $9000$ & \pair{\textbf{98.7}}{97.3} & \pair{80.3}{82.0}
                & \pair{96.7}{96.0} & \pair{80.0}{76.0}\\
  MPPI & $9000$ & \pair{63.7}{74.0} & \pair{39.3}{42.0}
                & \pair{64.7}{60.0} & \pair{35.7}{38.7}\\
  Adam & $3000$ & \pair{94.0}{88.0} & \pair{66.0}{64.7}
                & \pair{94.3}{92.7} & \pair{66.0}{66.7}\\
  \addlinespace[2pt]\midrule\addlinespace[1pt]
  Offline-DMPO    & $256$ & 92.4 & 67.8 & 90.0 & 62.9\\
  L2O-MPC & $256$ & 90.9 & 69.6 & 89.3 & 60.9\\
  \addlinespace[2pt]\midrule\addlinespace[1pt]
  \textbf{RP1 (ours)} & $9$ & \textbf{98.7} & \textbf{88.7} & \textbf{97.8} & \textbf{82.0}\\
  \bottomrule
  \end{tabular*}
  \vspace{3pt}
  \caption{\centering{\textbf{First-hit success (\%).} Goal tolerance $\tau$ (rad), bold marks best number. For CEM, MPPI, and Adam, the main number uses latent $L_2$ while the gray parenthesized number uses the learned value critic.}}
\end{table}

Even this near-saturated task discriminates between planners once the
tolerance is tightened. At $\tau{=}0.1$, every competent planner brings the
arm into the neighborhood of the goal: margins are within a point or two,
and the relevant difference is cost, with RP1 matching the best baseline on
three orders of magnitude fewer world-model rollouts. Halving the tolerance
separates reaching a region from stopping inside it. All methods degrade,
but RP1 degrades the least and retains the best score in every column, and
its margin over the strongest baseline widens from at most one point at
$\tau{=}0.1$ to $6.7$ points on LeWM and $2.0$ on PLDM at $\tau{=}0.05$. We
attribute this to terminal precision rather than coverage: plans that fail
at $\tau{=}0.05$ typically find the right approach and miss only in the
final action blocks, which seem to be refined more accurately in RP1 than other methods.

\subsection{OGBench Cube}
\label{sec:ogbench}

OGBench Cube~\citep{park2025ogbench} is a vision-based manipulation
benchmark: a robot arm must pick up a cube and place it at a goal
position, observed only from pixels, with goals placed $25$ or $100$
primitive steps away ($h25$, $h100$). The difficulty of the task comes from contact. A small change early in
a plan decides whether the gripper closes on the cube or misses it
entirely, so the objective over plans is discontinuous and multimodal, a poor fit for both smooth gradient descent and a unimodal sampling
distribution. Contact also makes reachability directed: a dropped or
knocked-away cube cannot be undone.

A complication of the benchmark is that its success criterion is
partially satisfied at reset: executing no actions at all already
scores $56.0\%$ at $h25$ and $45.3\%$ at $h100$ under the identical
evaluation protocol (Appendix~\ref{app:results-ogbench}). Raw success
rates, which we report as \emph{easy}, therefore compress exactly the
episodes that require manipulation, and differences between planners
are partly masked by a floor every method inherits for free. Alongside
the easy score we report a \emph{hard} score, the same runs normalized
against the measured no-op floor $f$ as $(s-f)/(100-f)\cdot 100$, which
measures the fraction of headroom above doing nothing that a planner
actually converts. The hard score is our primary number; easy is kept
for comparability with the benchmark's convention.

%
\vspace{0.8em}

\begin{table}[H]
  \centering
  \label{tab:ogbench}
  \captionsetup{font=small,skip=3pt}
  \caption*{\textbf{OGBench Cube.}}
  \scriptsize
  \setlength{\tabcolsep}{01.5pt}
  \renewcommand{\arraystretch}{1.1}
  \begin{tabular*}{\linewidth}{@{\extracolsep{\fill}}lcccccccc@{}}
  \toprule
  & \multicolumn{4}{c}{LeWM} & \multicolumn{4}{c}{PLDM}\\
  \cmidrule(lr){2-5}\cmidrule(l){6-9}
  & \multicolumn{2}{c}{$25$ steps} & \multicolumn{2}{c}{$100$ steps}
    & \multicolumn{2}{c}{$25$ steps} & \multicolumn{2}{c}{$100$ steps}\\
  \cmidrule(lr){2-3}\cmidrule(lr){4-5}\cmidrule(lr){6-7}\cmidrule(l){8-9}
  planner\,{\tiny\secv{(roll.)}} & easy & hard & easy & hard & easy & hard & easy & hard\\
  \midrule
  \multicolumn{9}{@{}l}{\itshape latent $L_2$ \,{\tiny\secv{(value-critic)}}}\\
  CEM \,{\tiny\secv{9000}} & \pairT{74.0}{84.0} & \pairT{40.9}{63.6} & \pairT{58.0}{76.7} & \pairT{23.2}{57.4}
       & \pairT{62.7}{70.0} & \pairT{15.2}{31.8} & \pairT{58.7}{64.0} & \pairT{24.5}{34.1}\\
  MPPI \,{\tiny\secv{9000}} & \pairT{56.7}{63.3} & \pairT{1.6}{16.6} & \pairT{46.7}{52.0} & \pairT{2.5}{12.2}
       & \pairT{58.7}{64.7} & \pairT{6.1}{19.8} & \pairT{47.3}{50.7} & \pairT{3.6}{9.8}\\
  Adam \,{\tiny\secv{3000}} & \pairT{74.0}{74.7} & \pairT{40.9}{42.5} & \pairT{57.3}{68.7} & \pairT{21.9}{42.7}
       & \pairT{63.3}{64.0} & \pairT{16.6}{18.2} & \pairT{55.3}{54.0} & \pairT{18.2}{15.9}\\
  \midrule
  Offline-DMPO \,{\tiny\secv{256}} & 72.9 & 38.4 & 55.8 & 19.2
       & 61.8 & 13.2 & 51.6 & 11.5\\
  L2O-MPC \,{\tiny\secv{256}} & 64.4 & 19.1 & 50.0 & 8.5 & 58.9 & 6.6 & 45.3 & 0.0\\
  \midrule
  \textbf{RP1 (ours)} \,{\tiny\secv{9}} & \textbf{89.1} & \textbf{75.2} & \textbf{82.4} & \textbf{67.8} & \textbf{82.9} & \textbf{61.1} & \textbf{77.1} & \textbf{58.1}\\
  \bottomrule
  \end{tabular*}
  \caption{\textbf{Success rate ($\%$).} We report easy numbers and hard numbers. For normalized (hard) scores, we set 0 if the method performed worse than floor, e.g. L2O-MPC on PLDM.}
\end{table}

The table separates the two contributions. The learned critic matters
mainly at the long horizon: under latent $L_2$, CEM's hard score on
LeWM falls from $40.9$ at $h25$ to $23.2$ at $h100$, while the same
planner scoring with the learned value holds $57.4$: once the goal
is far away, latent distance stops ordering plans by how long they take
to realize. The learned search accounts for the rest: RP1 posts the
best score in every column using $9$ rollouts per decision against
$3{,}000$--$9{,}000$ for the hand-designed planners. The
normalization itself is informative about the baselines: several hand-designed search algorithms end up within a few points of the no-op policy, so most of their
raw success was inherited from not-moving. RP1 does significantly better, getting up to twice the success rate on PLDM on the hard evals of its closest competitor CEM.

\subsection{World-Model Hallucination and Dyna Finetuning}
\label{sec:dyna-main}
Training the planner through a frozen world model lets it exploit model
error. Inspecting RP1's failure episodes in OGBench Cube, we found the world
model \emph{hallucinating} contact outcomes: for LeWM, grasps that
miss the cube are nevertheless predicted "magically" to attach it to the arm. No improvement in search can fix such hallucinations. We therefore correct the model rather than the planner: one Dyna iteration \citep{sutton1991dyna} deploys the
trained planner, collects its (failure) rollouts, finetunes the world
model on them, and retrains the planner (Appendix~\ref{sec:dyna}).

\begin{table}[H]
  \centering
  \captionsetup{font=small,skip=3pt}
  \footnotesize
  \renewcommand{\arraystretch}{1.1}
  \begin{tabular*}{\linewidth}{@{\extracolsep{\fill}}lcccc@{}}
  \toprule
  & \multicolumn{2}{c}{LeWM} & \multicolumn{2}{c}{PLDM}\\
  \cmidrule(lr){2-3}\cmidrule(l){4-5}
  & $25$ steps & $100$ steps & $25$ steps & $100$ steps\\
  \midrule
  RP1 \,{\scriptsize\secv{(pretrained world model)}}
      & \pair{75.2}{89.1} & \pair{67.8}{82.4}
      & \pair{61.1}{82.9} & \pair{58.1}{77.1}\\
  RP1\textsubscript{Dyna} \,{\scriptsize\secv{(finetuned world model)}}
      & \pair{\textbf{87.3}}{94.4} & \pair{\textbf{72.0}}{84.7}
      & \pair{\textbf{80.2}}{91.3} & \pair{\textbf{67.1}}{82.0}\\
  \addlinespace[1pt]
  $ $ & $+12.1$ & $+4.2$ & $+19.1$ & $+9.0$\\
  \bottomrule
  \end{tabular*}
  \vspace{3pt}
  \caption{\textbf{Effect of one Dyna iteration (hard success, \%).}
  Gray parentheses give the unnormalized easy score. Rollouts are
  collected on $h25$ tasks only; the finetuned model is reused unchanged
  at $h100$.}
  \label{tab:dyna}
\end{table}

One iteration recovers a large part of the exploitation gap, and we found empirically that the "grasp-and-miss" hallucinations were significantly reduced in LeWM. However, despite mitigating exploitation,
characterizing when it recurs remains open.

\subsection{Planning Speed}
\label{sec:planning-speed}

We measure end-to-end planning latency on OGBench Cube 25-step goal offset with LeWM,
including the complete computation from the input latents to the returned
action plan. All methods run in fp32 on a single NVIDIA H200 and are
benchmarked using both CUDA-graph-captured and eager execution, with the
faster mean reported. We consider one planner running alone ($B=1$) and
$50$ independent planners running concurrently on the same GPU ($B=50$),
representing multiple control loops sharing one accelerator.

\begin{figure}[H]
\centering
\begin{tikzpicture}[font=\footnotesize]
 
\begin{axis}[
  name=A,
  ymode=log, log origin=infty, clip=false,
  width=6.7cm, height=4.6cm,
  axis lines=left, axis line style={black!45, line width=.5pt},
  tick align=outside, tick style={black!45},
  ymajorgrids, grid style={black!12, line width=.4pt},
  minor y tick num=0, minor tick style={draw=none},
  ybar, bar width=4.4mm,
  point meta=explicit symbolic, nodes near coords,
  every node near coord/.append style={font=\scriptsize, inner sep=2.5pt},
  symbolic x coords={RP1,L2O,DMPO,CEM,MPPI,Adam},
  xtick=\empty, enlarge x limits=0.13,
  ymin=8, ymax=2.2e4, ytick={1e1,1e2,1e3,1e4}, yticklabels={10,100,1k,10k},
]
\addplot[rp1]  coordinates {(RP1,30)    [30]};
\addplot[l2o]  coordinates {(L2O,45.3)  [45]};
\addplot[dmpo] coordinates {(DMPO,83.3) [83]};
\addplot[cem]  coordinates {(CEM,390.6) [391]};
\addplot[mppi] coordinates {(MPPI,420.3)[420]};
\addplot[adam] coordinates {(Adam,959.3)[959]};
\draw[black!75, ->, line width=.5pt] (axis cs:CEM,6.5e3) -- (axis cs:RP1,6.5e3);
\node[anchor=south, xshift=4.7mm] at (axis cs:L2O,7.5e3) {$13\times$ faster};
\end{axis}
 
\begin{axis}[
  name=B, at={(A.south east)}, xshift=12mm, anchor=south west,
  ymode=log, log origin=infty, clip=false,
  width=6.7cm, height=4.6cm,
  axis lines=left, axis line style={black!45, line width=.5pt},
  tick align=outside, tick style={black!45},
  ymajorgrids, grid style={black!12, line width=.4pt},
  minor y tick num=0, minor tick style={draw=none},
  ybar, bar width=4.4mm,
  point meta=explicit symbolic, nodes near coords,
  every node near coord/.append style={font=\scriptsize, inner sep=2.5pt},
  symbolic x coords={RP1,L2O,DMPO,CEM,MPPI,Adam},
  xtick=\empty, enlarge x limits=0.13,
  ymin=20, ymax=6e5, ytick={1e2,1e3,1e4,1e5}, yticklabels={100,1k,10k,100k},
]
\addplot[rp1]  coordinates {(RP1,79)      [79]};
\addplot[l2o]  coordinates {(L2O,188.3)   [188]};
\addplot[dmpo] coordinates {(DMPO,236.3)  [236]};
\addplot[cem]  coordinates {(CEM,5308.2)  [5.3k]};
\addplot[mppi] coordinates {(MPPI,10020.3)[10.0k]};
\addplot[adam] coordinates {(Adam,17129.8)[17.1k]};
\draw[black!75, ->, line width=.5pt] (axis cs:CEM,1.6e5) -- (axis cs:RP1,1.6e5);
\node[anchor=south, xshift=4.7mm] at (axis cs:L2O,1.9e5) {$67\times$ faster};
\end{axis}
 
\node[anchor=north, font=\footnotesize] at ([yshift=-3mm]A.south)
  {(a) latency in ms, $B=1$};
\node[anchor=north, font=\footnotesize] at ([yshift=-3mm]B.south)
  {(b) latency in ms, $B=50$};
 
\foreach \sty/\name [count=\i from 0] in
  {rp1f/RP1, l2of/L2O-MPC, dmpof/DMPO, cemf/CEM, mppif/MPPI, adamf/Adam}{
  \node[\sty, minimum width=3.4mm, minimum height=2.6mm, inner sep=0pt,
        anchor=west] (sw\i) at ([xshift=\i*21mm-3mm, yshift=9mm]A.north west) {};
  \node[anchor=west, font=\footnotesize] at ([xshift=1.4mm]sw\i.east) {\name};
}
\end{tikzpicture}
\caption{\textbf{End-to-end planning latency.} On OGBench Cube with LeWM (one NVIDIA H200, fp32), with RP1 $13\times$ faster than CEM for one planner and $67\times$ faster for $50$ concurrent planners.}
\label{fig:speed}
\end{figure}

The $1{,}000\times$ reduction in world-model rollouts does not translate
one-for-one into single-planner latency because the GPU can evaluate many of
a sampling planner's candidate trajectories in parallel. Nevertheless, RP1
completes a planning request in $30$\,ms, compared with $391$\,ms for CEM,
the strongest conventional baseline, yielding a $13\times$ speedup. The
advantage grows substantially under concurrent inference: RP1 processes $50$
planners in $79$\,ms, whereas CEM requires $5.31$\,s, yielding a
$67\times$ speedup and reducing the amortized GPU time per planner from
$106.2$ to $1.58$\,ms. RP1 also remains $3.0\times$ faster than DMPO and
$2.4\times$ faster than L2O-MPC in this setting. Thus, the rollout reduction
becomes most consequential when one accelerator serves several control
loops, such as multiple robot arms planning in tandem.

\section{Discussion}
\label{sec:discussion}

Our results support the two hypotheses that motivated RP1. First, on tasks
where geometric proximity differs from
reachability, replacing the latent-distance objective with a learned
quasimetric-style critic resolves failures that no amount of additional search can
fix (Sec.~\ref{sec:tworoom}). Second, 
learning the search procedure itself yields large gains where the plan
landscape is discontinuous or multimodal: RP1 matches or exceeds the strongest
hand-designed planners while issuing two to three orders of magnitude fewer
world-model queries. Together, these findings suggest that for current latent
world models, planning quality is often the
binding constraint on downstream performance, not prediction fidelity.

Several limitations remain. First and foremost, our evaluations are for different hyperparameters between environments. We believe that this can be resolved at least for the critic, and intend on updating the paper once we have found a configuration that works across environments. Our results use open-loop execution; closed-loop
replanning may change the relative standing of the methods, so we intend to report this
in future work. Because the planner is trained through the frozen world model,
it can exploit model errors in regions of poor data coverage; the Dyna-style
finetuning loop of Sec.~\ref{sec:dyna} mitigates but does not eliminate this
failure mode, and when planner exploitation occurs is still open for characterization. Finally, our evaluation covers two world-model bases and three
domains: broader coverage across model families and longer-horizon,
multi-object tasks is needed before claiming generality, and the learned
planner currently assumes a fixed horizon and interface, whereas
hand-designed planners transfer across these choices without retraining.

\section{Acknowledgements}
The authors want to thank Xiao-ke Lu, Sambhav Gupta and Kunvar Thaman for their insightful suggestions on initial drafts.

\bibliographystyle{plainnat}
\bibliography{references}
\newpage

\appendix

\section{Proofs of Theoretical Results}
\label{sec:optimality-proofs}

We formalize two motivations for Reinforced Planning. First, predictive
world-model learning does not determine a Euclidean latent geometry suitable
for planning. Second, a learned neural planner can adapt its optimization rule
to the task, whereas conventional optimizers use one configuration across the
task distribution.

\subsection{Proof of Theorem~\ref{thm:latent-geometry}}
\label{app:latent-geometry}

\begin{proof}
Let $A\in\mathbb R^{d\times d}$ be invertible and define
\begin{equation}
E_A(s)
=
A E_\phi(s),
\qquad
h_A(z,a)
=
A h_\phi(A^{-1}z,a).
\label{eq:transformed-world-model}
\end{equation}
Then
\begin{align}
h_A(E_A(s),a)
&=
A h_\phi(A^{-1}A E_\phi(s),a)
\\
&=
A h_\phi(E_\phi(s),a)
\\
&=
A E_\phi(\mathcal T(s,a))
\\
&=
E_A(\mathcal T(s,a)).
\end{align}
Thus, $(E_A,h_A)$ is exact whenever $(E_\phi,h_\phi)$ is exact. The same
argument applied recursively shows that all multi-step trajectories remain
exactly predicted under the transformed coordinates.

Now define
\begin{equation}
u
=
E_\phi(g_1)-E_\phi(s),
\qquad
v
=
E_\phi(g_2)-E_\phi(s).
\end{equation}
Because $u$ and $v$ are linearly independent, there exists an invertible
matrix $B$ such that
\begin{equation}
Bu=e_1,
\qquad
Bv=e_2,
\end{equation}
where $e_1$ and $e_2$ are the first two standard basis vectors.

For any $R>1$, define
\begin{align}
A_1
&=
\operatorname{diag}(R,1,\ldots,1)B,
\\
A_2
&=
\operatorname{diag}(1,R,1,\ldots,1)B.
\end{align}
Both matrices are invertible and therefore induce exact latent world models.
Under the first transformation,
\begin{equation}
\frac{
\lVert E_{A_1}(g_1)-E_{A_1}(s)\rVert_2
}{
\lVert E_{A_1}(g_2)-E_{A_1}(s)\rVert_2
}
=
R,
\end{equation}
whereas under the second,
\begin{equation}
\frac{
\lVert E_{A_2}(g_1)-E_{A_2}(s)\rVert_2
}{
\lVert E_{A_2}(g_2)-E_{A_2}(s)\rVert_2
}
=
\frac{1}{R}.
\end{equation}
The two exact world models therefore induce opposite Euclidean distance
orderings. Since $R$ is arbitrary, the separation between the distances can
be made arbitrarily large.
\end{proof}

\subsection{Temporal reachability as a directed distance}
\label{app:theory-quasimetric}

Consider a deterministic controlled system with state space $\mathcal S$.
Define
\begin{equation}
d^\star(s,g)
:=
\inf\left\{
T\in\mathbb N_0:
\text{some length-$T$ action sequence takes $s$ to $g$}
\right\},
\end{equation}
with $d^\star(s,g)=+\infty$ when $g$ is unreachable from $s$.

\begin{proposition}[Temporal reachability is an extended directed quasimetric]
\label{prop:quasimetric}
For all states $s,y,g$,
\begin{equation}
d^\star(s,g)\geq 0,
\qquad
d^\star(s,g)=0 \iff s=g,
\qquad
d^\star(s,g)
\leq
d^\star(s,y)+d^\star(y,g).
\end{equation}
However, $d^\star(s,g)$ need not equal $d^\star(g,s)$. Consequently, when
temporal reachability is asymmetric, no symmetric distance such as a latent norm $\lVert E(s)-E(g)\rVert_2$ can represent it exactly on all ordered state
pairs.
\end{proposition}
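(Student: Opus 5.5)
The plan is to verify the three quasimetric axioms directly from the definition of $d^\star$ as a minimal number of steps. Then I exhibit a concrete asymmetric system, and finally deduce the representation impossibility from the symmetry of any norm. Throughout, I use the conventions $\mathbb N_0=\{0,1,2,\dots\}$ and $+\infty+c=+\infty$, so that all statements hold in the extended sense.

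\emph{Nonnegativity and identity.} Nonnegativity is immediate, since $d^\star$ is an infimum over a subset of $\mathbb N_0$, or equals $+\infty$ when that subset is empty. For identity, the empty action sequence of length $0$ takes $s$ to $s$, so $d^\star(s,s)=0$. Conversely, if $d^\star(s,g)=0$, then the infimum is attained, because subsets of $\mathbb N_0$ have minima. Hence some length-$0$ sequence takes $s$ to $g$. A length-$0$ sequence applies no transitions, so $g=s$.

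\emph{Triangle inequality.} If either $d^\star(s,y)$ or $d^\star(y,g)$ is infinite, the inequality holds trivially. Otherwise, take minimizing sequences $\mathbf a$ of length $T_1=d^\star(s,y)$ from $s$ to $y$, and $\mathbf b$ of length $T_2=d^\star(y,g)$ from $y$ to $g$. Because the dynamics $\mathcal T$ are deterministic, the concatenation $(\mathbf a,\mathbf b)$ of length $T_1+T_2$ takes $s$ to $g$. Hence $d^\star(s,g)\le T_1+T_2$. This is the only place where determinism is used, and it needs care only to note that the sequence reaching $y$ lands exactly on $y$.

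\emph{Asymmetry.} It suffices to give one example. Take $\mathcal S=\{0,1\}$ and a single action $a$ with $\mathcal T(0,a)=1$ and $\mathcal T(1,a)=1$, an absorbing state analogous to a knocked-away cube in Section~\ref{sec:ogbench}. Then $d^\star(0,1)=1$ while $d^\star(1,0)=+\infty$. If a finite asymmetric example is preferred, use a directed cycle on three states, where $d^\star(0,1)=1$ but $d^\star(1,0)=2$.

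\emph{Non-representability.} Suppose some symmetric $D$, for instance $D(s,g)=\lVert E(s)-E(g)\rVert_2$ for any encoder $E$, satisfied $D(s,g)=d^\star(s,g)$ for all ordered pairs. Then
\[
d^\star(s,g)=D(s,g)=D(g,s)=d^\star(g,s)
\]
for every pair, which contradicts the example above. For a norm, $D$ is also finite-valued, so in the absorbing example it cannot even equal $+\infty$.

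The main subtlety is not technical but definitional. One must fix the extended-arithmetic conventions, and note that the identity axiom depends on $T=0$ being allowed.
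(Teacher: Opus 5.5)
Your proof is correct and follows the paper's argument essentially step for step. You use the empty sequence for identity and concatenate shortest sequences for the triangle inequality. For asymmetry you take a one-way transition into a state from which the start is unreachable, and you finish with the symmetry contradiction. Your explicit remark that the infimum over $\mathbb N_0$ is attained, which the paper uses only implicitly, is a welcome tightening rather than a different route, as are your concrete two-state and three-cycle examples.
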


\begin{proof}
The empty action sequence takes each state to itself, so
$d^\star(s,s)=0$. Conversely, a length-zero sequence cannot change the state,
so $d^\star(s,g)=0$ implies $s=g$. Nonnegativity follows because action
sequence lengths belong to $\mathbb N_0$.

The triangle inequality is immediate if either $d^\star(s,y)$ or
$d^\star(y,g)$ is infinite. Otherwise, concatenate a shortest sequence from
$s$ to $y$ with a shortest sequence from $y$ to $g$. The resulting sequence
takes $s$ to $g$ and has length
$d^\star(s,y)+d^\star(y,g)$.

Finally, consider two states for which an action takes $s$ to $g$, but no
action sequence returns from $g$ to $s$. Then
$d^\star(s,g)=1$ while $d^\star(g,s)=+\infty$. Because every symmetric
distance assigns the same value to $(s,g)$ and $(g,s)$, it cannot represent
$d^\star$ exactly in this case.
\end{proof}

\subsection{Formal Statement and Proof of
Theorem~\ref{thm:fixed-config-superiority}}
\label{app:heterogeneity}

We first make the ambient spaces and regularity conditions explicit. Let
\begin{equation}
\mathcal Z
\subseteq
\mathbb R^{d_z},
\qquad
\mathcal X
\subseteq
\mathcal Z\times\mathcal Z
\subseteq
\mathbb R^{2d_z},
\qquad
\Omega_{\mathrm{pl}}
\subseteq
\mathbb R^{d_\omega},
\qquad
\mathcal Y
\subseteq
\mathbb R^{d_y}.
\end{equation}
Here, \(\mathcal X\) is the task space, with
\(x=(z_t,z_g)\in\mathcal X\), \(\Omega_{\mathrm{pl}}\) is the planner-state
space, and \(\mathcal Y\) is the output space of the planner interface. We
assume that \(\mathcal X\) and \(\Omega_{\mathrm{pl}}\) are nonempty and
compact, and equip all finite-dimensional spaces and product spaces with
their Euclidean norms.

Let \(\mu\) be a probability distribution supported on \(\mathcal X\), let
\(\omega_0\in\Omega_{\mathrm{pl}}\) be the common planner initialization, and
let \(K<\infty\) be the number of refinement rounds. Assume that
\begin{equation}
\mathcal I:
\mathcal X\times\Omega_{\mathrm{pl}}
\longrightarrow
\mathcal Y
\end{equation}
and
\begin{equation}
\ell:
\mathcal X\times\Omega_{\mathrm{pl}}
\longrightarrow
\mathbb R
\end{equation}
are continuous.

Define the set of feasible planner inputs by
\begin{equation}
\mathcal W
=
\left\{
\left(
\omega,
\mathcal I(x,\omega)
\right):
x\in\mathcal X,\;
\omega\in\Omega_{\mathrm{pl}}
\right\}
\subseteq
\mathbb R^{d_\omega+d_y}.
\label{eq:feasible-planner-inputs}
\end{equation}
Because \(\mathcal X\times\Omega_{\mathrm{pl}}\) is compact and
\((x,\omega)\mapsto(\omega,\mathcal I(x,\omega))\) is continuous,
\(\mathcal W\) is compact.

Let
\begin{equation}
\mathfrak F_{\mathcal I}
=
\left\{
F:\mathcal W\to\Omega_{\mathrm{pl}}
\;:\;
F \text{ is continuous}
\right\}
\end{equation}
be the class of continuous feasible search rules. For any $F\in\mathfrak F_{\mathcal I}$, set
$
\omega_{F,0}(x)
=
\omega_0,
$
and, for \(k=0,\ldots,K-1\),
\begin{equation}
\omega_{F,k+1}(x)
=
F\!\left(
\omega_{F,k}(x),
\mathcal I\bigl(x,\omega_{F,k}(x)\bigr)
\right).
\label{eq:appendix-planner-recursion}
\end{equation}
Since every rule maps \(\mathcal W\) into
\(\Omega_{\mathrm{pl}}\), all planner states remain feasible.

The expected loss of \(F\) is
\begin{equation}
\mathcal L(F)
=
\mathbb E_{x\sim\mu}
\left[
\ell\bigl(x,\omega_{F,K}(x)\bigr)
\right].
\label{eq:appendix-planner-loss}
\end{equation}
Assumption~\ref{as:planner-universality} states that
\(\{\mathcal F_\theta:\theta\in\Theta\}\subseteq
\mathfrak F_{\mathcal I}\) and that, for every
\(F\in\mathfrak F_{\mathcal I}\) and every \(\varepsilon>0\), there exists
\(\theta\in\Theta\) satisfying
\begin{equation}
\sup_{w\in\mathcal W}
\left\|
\mathcal F_\theta(w)-F(w)
\right\|
<
\varepsilon.
\label{eq:appendix-universal-approximation}
\end{equation}

Let \(\mathcal X_1,\ldots,\mathcal X_r\) be a measurable partition of
\(\mathcal X\), up to sets of \(\mu\)-measure zero, with
\begin{equation}
p_i
=
\mu(\mathcal X_i)
>
0.
\end{equation}
For every \(F\in\mathfrak F_{\mathcal I}\), define
\begin{equation}
\mathcal L_i(F)
=
\mathbb E
\left[
\ell\bigl(x,\omega_{F,K}(x)\bigr)
\,\middle|\,
x\in\mathcal X_i
\right].
\end{equation}

Let
\begin{equation}
\mathcal B
\subseteq
\mathbb R^{d_B}
\end{equation}
be a nonempty compact family of fixed search configurations. Each
\(B\in\mathcal B\) induces a rule
\(F_B\in\mathfrak F_{\mathcal I}\), with the same configuration \(B\) used
on every task. Define
\begin{equation}
\mathcal L_i(B)
=
\mathcal L_i(F_B),
\qquad
\mathcal L(B)
=
\sum_{i=1}^r p_i\mathcal L_i(B),
\end{equation}
and assume that \(B\mapsto\mathcal L_i(B)\) is continuous for every region
\(i\). Consequently, the regional minimum
\begin{equation}
b_i
:=
\min_{B\in\mathcal B}
\mathcal L_i(B)
\label{eq:regional-minimum}
\end{equation}
exists for every \(i\).

\begin{assumption}[Regional incompatibility]
\label{as:regional-incompatibility}
No fixed configuration minimizes every regional loss:
\begin{equation}
\bigcap_{i=1}^r
\operatorname*{argmin}_{B\in\mathcal B}
\mathcal L_i(B)
=
\varnothing.
\label{eq:incompatible-regional-optima}
\end{equation}
\end{assumption}

\begin{assumption}[Interface composability]
\label{as:interface-composability}
There exists a continuous feasible rule
\(F^\star\in\mathfrak F_{\mathcal I}\) that attains the best fixed-configuration
loss in every region:
\begin{equation}
\mathcal L_i(F^\star)
=
b_i
=
\min_{B\in\mathcal B}
\mathcal L_i(B),
\qquad
i=1,\ldots,r.
\label{eq:interface-composability}
\end{equation}
\end{assumption}

\begin{theorem}[Strict advantage under task heterogeneity;
restatement of Theorem~\ref{thm:fixed-config-superiority}]
Under Assumption~\ref{as:planner-universality},
Assumption~\ref{as:regional-incompatibility}, and
Assumption~\ref{as:interface-composability},
\begin{equation}
\inf_{\theta\in\Theta}
\mathcal L(\mathcal F_\theta)
\leq
\sum_{i=1}^r
p_i
\min_{B\in\mathcal B}
\mathcal L_i(B)
<
\min_{B\in\mathcal B}
\mathcal L(B).
\end{equation}
\end{theorem}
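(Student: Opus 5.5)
The proof splits into the two inequalities, handled separately.

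\textbf{First inequality.} This is a continuity-plus-approximation argument. By Assumption~\ref{as:interface-composability}, the rule $F^\star$ satisfies
\[
\mathcal L(F^\star)=\sum_i p_i\mathcal L_i(F^\star)=\sum_i p_i b_i.
\]
It then suffices to show that $\mathcal L(\mathcal F_\theta)$ can be made arbitrarily close to $\mathcal L(F^\star)$.

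\emph{Uniform closeness of the unrolled states.} We show by induction on $k\le K$ that $\sup_{x\in\mathcal X}\|\omega_{\mathcal F_\theta,k}(x)-\omega_{F^\star,k}(x)\|\to0$ as $\sup_{w\in\mathcal W}\|\mathcal F_\theta(w)-F^\star(w)\|\to0$. The induction step is a triangle inequality:
\[
\|\mathcal F_\theta(w_\theta)-F^\star(w^\star)\|\le\|\mathcal F_\theta(w_\theta)-F^\star(w_\theta)\|+\|F^\star(w_\theta)-F^\star(w^\star)\|,
\]
where $w_\theta=(\omega_{\mathcal F_\theta,k}(x),\mathcal I(x,\omega_{\mathcal F_\theta,k}(x)))$ and $w^\star$ is defined analogously.
\begin{itemize}
\item The first term is bounded by the uniform approximation error from Assumption~\ref{as:planner-universality}.
\item The second term is small because $F^\star$ is uniformly continuous on the compact set $\mathcal W$.
\item Within that second term, $w_\theta$ is close to $w^\star$ uniformly in $x$, because $\mathcal I$ is uniformly continuous on the compact set $\mathcal X\times\Omega_{\mathrm{pl}}$ and the induction hypothesis controls the $\omega$-components.
\end{itemize}
All iterates stay in $\Omega_{\mathrm{pl}}$, since each rule maps $\mathcal W$ into $\Omega_{\mathrm{pl}}$. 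Compactness is therefore never lost.

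\emph{Passing to the loss.} The function $\ell$ is uniformly continuous on the compact set $\mathcal X\times\Omega_{\mathrm{pl}}$. Hence $\sup_x|\ell(x,\omega_{\mathcal F_\theta,K}(x))-\ell(x,\omega_{F^\star,K}(x))|$ can be made smaller than any $\varepsilon$. Integrating against $\mu$ gives $\mathcal L(\mathcal F_\theta)\le\mathcal L(F^\star)+\varepsilon$, and so $\inf_\theta\mathcal L(\mathcal F_\theta)\le\sum_i p_ib_i$. Boundedness of $\ell$ on this compact set also ensures all expectations are finite.

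I expect this inductive uniform-continuity step to be the main technical point, though it is routine. The one thing to get right is stating uniformity in $x$ rather than pointwise.

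\textbf{Second inequality.} Here I use compactness of $\mathcal B$ and the continuity of each $B\mapsto\mathcal L_i(B)$, so $B\mapsto\mathcal L(B)=\sum_ip_i\mathcal L_i(B)$ attains its minimum at some $B^\circ$. Each summand satisfies $\mathcal L_i(B^\circ)\ge b_i$, so $\mathcal L(B^\circ)\ge\sum_ip_ib_i$.

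Suppose equality held. Since every $p_i>0$ and each term $\mathcal L_i(B^\circ)-b_i$ is nonnegative, every term would have to vanish. Then $B^\circ\in\operatorname{argmin}_B\mathcal L_i(B)$ for all $i$, contradicting Assumption~\ref{as:regional-incompatibility}. Hence the inequality is strict, which completes the chain.
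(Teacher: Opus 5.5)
Your proposal is correct and follows the paper's proof essentially step for step. Both use the interface-composability identity $\mathcal L(F^\star)=\sum_i p_i b_i$, the same induction that adds and subtracts $F^\star(w_{m,k}(x))$ and relies on uniform continuity of $\mathcal I$, $F^\star$ and $\ell$ on the compact sets, and your contradiction at the minimizer $B^\circ$ of $\mathcal L$ is the paper's minimization of the excess $\Delta(B)=\mathcal L(B)-\sum_i p_i b_i$ over compact $\mathcal B$, arranged in a slightly different order.
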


\begin{proof}
By Assumption~\ref{as:interface-composability}, there exists
\(F^\star\in\mathfrak F_{\mathcal I}\) such that
\(\mathcal L_i(F^\star)=b_i\) for every \(i\). Since the regions partition
the task distribution,
\begin{equation}
\mathcal L(F^\star)
=
\sum_{i=1}^r p_i\mathcal L_i(F^\star)
=
\sum_{i=1}^r p_i b_i.
\label{eq:regional-composite-loss}
\end{equation}

We next show that the neural-planner class can approach this loss. For each
integer \(m\geq1\), apply
Assumption~\ref{as:planner-universality} with
\(\varepsilon=1/m\). This gives a parameter
\(\theta_m\in\Theta\) satisfying
\begin{equation}
\sup_{w\in\mathcal W}
\left\|
\mathcal F_{\theta_m}(w)-F^\star(w)
\right\|
<
\frac{1}{m}.
\label{eq:uniform-planner-approximation}
\end{equation}
For brevity, write
\begin{equation}
\omega_{m,k}(x)
=
\omega_{\mathcal F_{\theta_m},k}(x),
\qquad
\omega^\star_k(x)
=
\omega_{F^\star,k}(x).
\end{equation}
We prove by induction that, for every fixed \(k\leq K\),
\begin{equation}
\sup_{x\in\mathcal X}
\left\|
\omega_{m,k}(x)-\omega^\star_k(x)
\right\|
\longrightarrow
0
\qquad
\text{as }m\to\infty.
\label{eq:planner-state-convergence}
\end{equation}

The claim holds for \(k=0\), because all planners share the initialization
\(\omega_0\). Suppose that it holds at round \(k\). Define
\begin{align}
w_{m,k}(x)
&=
\left(
\omega_{m,k}(x),
\mathcal I\bigl(x,\omega_{m,k}(x)\bigr)
\right),
\\
w^\star_k(x)
&=
\left(
\omega^\star_k(x),
\mathcal I\bigl(x,\omega^\star_k(x)\bigr)
\right).
\end{align}
Continuity of \(\mathcal I\) on the compact set
\(\mathcal X\times\Omega_{\mathrm{pl}}\) implies uniform continuity.
Therefore, the induction hypothesis gives
\begin{equation}
\sup_{x\in\mathcal X}
\left\|
w_{m,k}(x)-w^\star_k(x)
\right\|
\longrightarrow
0.
\label{eq:planner-input-convergence}
\end{equation}

Using the planner recursion and adding and subtracting
\(F^\star(w_{m,k}(x))\), we obtain
\begin{align}
&
\sup_{x\in\mathcal X}
\left\|
\omega_{m,k+1}(x)-\omega^\star_{k+1}(x)
\right\|
\\
&\quad\leq
\sup_{x\in\mathcal X}
\left\|
\mathcal F_{\theta_m}\bigl(w_{m,k}(x)\bigr)
-
F^\star\bigl(w_{m,k}(x)\bigr)
\right\|
\nonumber\\
&\qquad\quad+
\sup_{x\in\mathcal X}
\left\|
F^\star\bigl(w_{m,k}(x)\bigr)
-
F^\star\bigl(w^\star_k(x)\bigr)
\right\|.
\label{eq:planner-induction-bound}
\end{align}
The first term is at most \(1/m\) by
Eq.~\ref{eq:uniform-planner-approximation}. The second converges to zero
because \(F^\star\) is uniformly continuous on the compact set
\(\mathcal W\) and Eq.~\ref{eq:planner-input-convergence} holds. This proves
Eq.~\ref{eq:planner-state-convergence} for every finite
\(k\leq K\).

Because \(\ell\) is continuous on the compact set
\(\mathcal X\times\Omega_{\mathrm{pl}}\), it is uniformly continuous.
Applying Eq.~\ref{eq:planner-state-convergence} at \(k=K\) therefore yields
\begin{equation}
\sup_{x\in\mathcal X}
\left|
\ell\bigl(x,\omega_{m,K}(x)\bigr)
-
\ell\bigl(x,\omega^\star_K(x)\bigr)
\right|
\longrightarrow
0.
\end{equation}
Consequently,
\begin{equation}
\mathcal L(\mathcal F_{\theta_m})
\longrightarrow
\mathcal L(F^\star).
\end{equation}
Together with Eq.~\ref{eq:regional-composite-loss}, this gives
\begin{equation}
\inf_{\theta\in\Theta}
\mathcal L(\mathcal F_\theta)
\leq
\sum_{i=1}^r p_i b_i.
\label{eq:regional-learned-bound}
\end{equation}

It remains to show that every single fixed configuration has strictly larger
expected loss. Define its excess over the regional minima by
\begin{equation}
\Delta(B)
=
\sum_{i=1}^r
p_i
\bigl(
\mathcal L_i(B)-b_i
\bigr).
\end{equation}
Every term in this sum is nonnegative. By
Assumption~\ref{as:regional-incompatibility}, each
\(B\in\mathcal B\) is strictly suboptimal in at least one region. Since every
\(p_i>0\),
\begin{equation}
\Delta(B)>0
\qquad
\text{for every }B\in\mathcal B.
\end{equation}
The function \(\Delta\) is continuous because it is a finite weighted sum of
the continuous functions \(\mathcal L_i\). Since \(\mathcal B\) is compact,
\(\Delta\) attains its minimum. Its pointwise strict positivity implies
\begin{equation}
\eta
:=
\min_{B\in\mathcal B}
\Delta(B)
>
0.
\end{equation}
Hence
\begin{align}
\min_{B\in\mathcal B}
\mathcal L(B)
&=
\min_{B\in\mathcal B}
\left[
\sum_{i=1}^r p_i b_i+\Delta(B)
\right]
\\
&=
\sum_{i=1}^r p_i b_i+\eta
\\
&>
\sum_{i=1}^r p_i b_i.
\label{eq:strict-fixed-gap}
\end{align}
Combining Eq.~\ref{eq:regional-learned-bound} with
Eq.~\ref{eq:strict-fixed-gap} proves
\begin{equation}
\inf_{\theta\in\Theta}
\mathcal L(\mathcal F_\theta)
\leq
\sum_{i=1}^r
p_i
\min_{B\in\mathcal B}
\mathcal L_i(B)
<
\min_{B\in\mathcal B}
\mathcal L(B).
\end{equation}
\end{proof}

\newpage
\section{Method Details}
\label{app:method}

\subsection{Value Learning}
\label{sec:value-learning}
For each environment and world model, we train a separate goal-conditioned
cost-to-go function~\citep{kaelbling1993learning,schaul2015universal}
\begin{equation}
V_\psi(z,z_g)\colon
\mathcal Z\times\mathcal Z\rightarrow\mathbb R_{\geq0}.
\end{equation}
Lower values represent shorter predicted temporal
distance~\citep{hartikainen2020dynamical} to the goal, as we assume a cost of $1$ per step. The
world-model encoder is frozen, and the value function is trained entirely from
cached offline latents.
The value is represented by a metric residual
network~\citep{liu2023metric,wang2022improved},
\begin{equation}
V_\psi(z,z_g)
=
\lVert u_\psi(z)-u_\psi(z_g)\rVert_2
+
\max_j
\operatorname{ReLU}
\left(
v_{\psi,j}(z_g)-v_{\psi,j}(z)
\right).
\label{eq:value-quasimetric}
\end{equation}
Here $u_\psi$ is the first half of the latent vector the critic head computes and $v_\psi$ is the second half. The first term is symmetric, while the second permits directed temporal
distance~\citep{wang2023optimal}.
For each update, we sample an anchor $z_t$, an $n$-step successor
$z_{t+n_{\mathrm{eff}}}$, and a hindsight
goal~\citep{andrychowicz2017hindsight} $z_g$, where $n_{\mathrm{eff}}
=
\min\{n,T_{\mathrm{episode}}-t\}.$
In-episode goals are sampled from future states with temporal offsets balanced
across the available episode horizon. Cross-episode goals are additionally
sampled to train long-range state pairs.
If an in-episode goal lies within the backup window, its exact temporal
distance $\delta$ is used. Otherwise, the target is bootstrapped with an
$n$-step backup~\citep{sutton2018reinforcement}:
\begin{equation}
y_t
=
\begin{cases}
\delta,
&
\delta\leq n_{\mathrm{eff}},
\\[3pt]
c_\gamma(n_{\mathrm{eff}})
+
\gamma^{n_{\mathrm{eff}}}
\bar V_{\bar\psi}(z_{t+n_{\mathrm{eff}}},z_g),
&
\text{otherwise},
\end{cases}
\label{eq:value-target}
\end{equation}
with
\begin{equation}
c_\gamma(n)
=
\sum_{i=0}^{n-1}\gamma^i
=
\begin{cases}
n, & \gamma=1,\\[2pt]
\dfrac{1-\gamma^n}{1-\gamma}, & \gamma<1.
\end{cases}
\end{equation}
The target parameters~\citep{mnih2015human} are updated by Polyak
averaging~\citep{lillicrap2016continuous},
\begin{equation}
\bar\psi
\leftarrow
(1-\eta)\bar\psi+\eta\psi.
\end{equation}
Following implicit Q-learning~\citep{kostrikov2022offline,park2023hiql}, the
value function is trained by asymmetric expectile
regression~\citep{newey1987asymmetric}, replacing the squared penalty with a
Huber penalty~\citep{huber1964robust} for robustness:
\begin{equation}
\mathcal L_V(\psi)
=
\mathbb E_{(z_t, z_g) \sim \mathcal D} \Bigl[
\bigl|\tau-\textbf{1}\!\left[V_\psi(z_t,z_g)-y_t>0\right]\bigr|
\,\ell_{\mathrm{Huber}}\!\left(V_\psi(z_t,z_g)-y_t\right)
\Bigr].
\label{eq:value-loss}
\end{equation}
Since $V_\psi$ is a cost-to-go rather than a return, we use $\tau<0.5$: the
weight on overestimation exceeds the weight on underestimation, so $V_\psi$
regresses toward a lower expectile of the target distribution, approximating
the shortest temporal distance realizable in the data rather than the
behavior-policy average.

\subsection{RP1 Training}
\label{sec:RP1-training}

RP1 is trained entirely offline while the world-model encoder and dynamics
predictor remain frozen. For each world model, the planner is trained from a
stride-five latent cache aligned with five-step action blocks. RP1 consists of three fully-connected layers with ReLU activations and hidden
width $512$, mapping
$\mathbb{R}^{2N|a|+1}\rightarrow\mathbb{R}^{512}\rightarrow\mathbb{R}^{512}
\rightarrow\mathbb{R}^{N|a|}$, where the input concatenates the flattened plan
$\mathbf a_k\in\mathbb{R}^{N|a|}$, its value gradient
$\mathbf g_k\in\mathbb{R}^{N|a|}$, and the scalar terminal value $v_k$, and the
output is the residual plan update. For example in OGBench Cube, with a planning horizon of $N=5$ action
blocks and $|a|=25$ (five primitive steps of the five-dimensional arm actions), the refiner is $251\rightarrow512\rightarrow512\rightarrow125$, i.e.\
$0.46$M parameters, applied with tied weights at all $K=8$ refinement steps.

The RP1 actor is a weight-tied residual plan refiner. At refinement step $k$,
it receives the current plan, its terminal value, and the value gradient with
respect to the plan:
\begin{align}
\hat z_N^{(k)}
&=
H_\phi(\mathbf a_k,z_0),
\\
v_k
&=
V_{\bar\psi}(\hat z_N^{(k)},z_g),
\\
\mathbf g_k
&=
\nabla_{\mathbf a_k}
V_{\bar\psi}(\hat z_N^{(k)},z_g).
\end{align}
The plan is updated by
\begin{equation}
\mathbf a_{k+1}
=
\operatorname{clip}_{[-a_{\max},a_{\max}]}
\left[
\mathbf a_k+
f_\theta(\mathbf a_k,v_k,\mathbf g_k)
\right].
\label{eq:RP1-training-update}
\end{equation}
The actor receives no raw current-state or goal latent. Goal information reaches
it only through $v_k$ and $\mathbf g_k$.

The actor is trained by differentiating the terminal value through the frozen
world-model rollout. The value and gradient supplied as refiner inputs are
detached, while the training loss remains differentiable through the refined
action sequence and its resulting rollout. No environment interaction is used
during this stage.

Let $v_k$ be the terminal value after refinement step $k$. The planner objective
is
\begin{equation}
 J_{\mathrm{RP1}}(\theta)
=
\mathbb{E}_{\hat z^{(K)} \sim \mathcal F_\theta}\Bigl [
v_K
+
\lambda_{\mathrm{mean}}
\frac{1}{K}
\sum_{k=1}^{K}v_k \Bigr ].
\label{eq:RP1-training-objective}
\end{equation}
The initial value from Section~\ref{sec:value-learning} initializes the RP1
critic. When critic co-training is enabled, it continues to receive the same
cached-data TD updates while a Polyak-averaged copy supplies $v_k$ and
$\mathbf g_k$.

We are doing open-loop planning. For closed-loop control, let $\hat \tau$ be the first imagined goal-arrival step, measured by $v_k \leq \epsilon$, or else just $N$ if the goal is not reached. Choosing the telescoped per-step costs
$
\sum_{i=0}^{\hat\tau-1}\gamma^i
(
1+\gamma V(\hat z_{t+i+1}^{(K)},z_g)
-V(\hat z_{t+i}^{(K)},z_g)
)$
as the planner's optimization objective yields an arrival-aware loss that favors reaching the goal earlier.

\subsection{Dyna Loop}
\label{sec:dyna}

The values $v_k$
in the planner-loss are read off latents $H_\phi(\mathbf a_k, \hat z_0)$
that the world models $h_\phi$ produced. Should $h_\phi$ be wrong, or not have coverage for the dataset $\mathcal D$, the planner can exploit inaccuracies, as is well reported in literature \cite{smith2006optimizer, talvitie2017self,
jafferjee2020hallucinating}. 

Much of this can be fixed by finetuning the world model on actual roll-out data, as originally proposed in the Dyna loop\cite{sutton1991dyna}. For this we deploy $\theta_r$ in the real
environment, collect the (failure) trajectories it produces, mix them into the training data, and finetune the world model on the mixture. Then we retrain the planner and repeat.

\newpage
%
\section{Empirical Results}
\label{app:results}

\subsection{General Setup}
\label{app:results-general}

\paragraph{Data and evaluation protocol.}
All world-model encoders and dynamics predictors are frozen throughout;
critics and planners are trained purely from cached latents. Each domain
provides $10{,}000$ episodes: value functions and planners train on episodes
$0$--$7{,}999$, and all evaluations draw start/goal states from the held-out
episodes $8{,}000$--$9{,}999$. Hyperparameters are selected on the disjoint
evaluation draws $\{50,51\}$ and never reported. Unless stated otherwise,
reported numbers average over the three predeclared evaluation seeds
$\{42,43,44\}$ and, for RP1, over three planner training seeds $\{0,1,2\}$;
Reacher uses a wider protocol (Sec.~\ref{app:results-reacher}).

\paragraph{Planning protocol.}
All planners use $5$-step action chunks and optimize $H{=}5$ chunks
($25$ primitive steps) open loop, replanning every $5$ chunks (receding
horizon $5$). The goal is the state $h$ primitive steps ahead and the
episode budget is $2h$ steps; TwoRoom and Cube evaluate
$h\in\{25,100\}$ ($h25$, $h100$), Reacher $h{=}25$. Simulator evaluations
run under EGL with a pinned render device, serialized per node.

\paragraph{Objectives.}
Every planner scores the predicted terminal state with one of the two
objectives of Sec.~\ref{sec:baseline-hyperparameters}: the latent-distance
objective $C_{\mathrm{latent}}(\hat z_N,z_g)=\lVert\hat z_N-z_g\rVert_2^2$,
or the value objective $C_{\mathrm{value}}(\hat z_N,z_g)=V_\psi(\hat z_N,z_g)$,
the goal-conditioned temporal-distance critic of
Sec.~\ref{sec:value-learning} (MRN quasimetric-style head) trained on the
frozen cached latents of each base with the per-domain settings of
Table~\ref{tab:rp1-hypers-all} (offline-value block).

\begin{table}[H]
\centering
\footnotesize
\renewcommand{\arraystretch}{1.22}
\setlength{\tabcolsep}{7pt}
\begin{tabular}{@{}l@{\hspace{1.5em}}ccc@{}}
\toprule
\textbf{Hyperparameter} & \textbf{Cube} & \textbf{Reacher} & \textbf{TwoRoom}\\
\midrule
\textbf{Actor — plan refiner}\\
clip range $a_{\max}$ (Eq.~\ref{eq:residual-update})       & $1.6/4.5$ & $2.2/1.8$ & $1.8/2.6/1.8/2.8$\\
mean-weight $\lambda_{\mathrm{mean}}$ (Eq.~\ref{eq:RP1-training-objective}) & $0.1$ & $0.3/0.5$ & $0.1/0.3/0.0/0.3$\\
actor LR (initial)                                         & $3{\cdot}10^{-4}$ & $10^{-4}/3{\cdot}10^{-4}$ & $10^{-4}/10^{-3}/10^{-3}/10^{-3}$\\
refinement iterations $K$                                  & $8$ & $8$ & $8$\\
plan horizon $H$ (chunks)                                  & $5$ & $5$ & $5$\\
batch size / training steps                                & $256/6{,}000$ & $128/1{,}000$ & $128/8{,}000$\\
replay probability                                         & $0.5$ & $0.5$ & $0$\\
max-delta (hindsight-goal cap, chunks)                     & $10$ & $12$ & $12$\\
cross-episode goal probability                             & $0.3$ & $0.3$ & $0.3$\\
\addlinespace[3pt]
\textbf{Critic — co-trained}\\
value-expansion weight                                     & $1.0$ & $0$ & $0$\\
critic live steps (then frozen EMA teacher)                & $3{,}000$ & $500$ & $6{,}400$\\
critic/actor step ratio $\cdot$ EMA $\tau$                 & $1\cdot0.005$ & $1\cdot0.005$ & $1\cdot0.005$\\
$\gamma$ / $n$-step                                        & $0.98/50$ & $0.98/50$ & $1.0/50$\\
expectile (annealed)                                       & $0.1{\to}0.03$ & $0.1{\to}0.03$ & $0.1$\\
critic LR (annealed)                                       & $10^{-3}{\to}10^{-4}$ & $10^{-3}{\to}10^{-4}$ & $10^{-3}$\\
TD batch size                                              & $1{,}024$ & $1{,}024$ & $1{,}024$\\
\addlinespace[3pt]
\textbf{Critic initialization — offline value (Sec.~\ref{sec:value-learning})}\\
head & \multicolumn{3}{c}{MRN quasimetric (hidden $256$, embed $128$, depth $2$)}\\
$\gamma$ / expectile / $n$-step                            & $0.98/0.03/50$ & $0.98/0.05/50$ & $1.0/0.1/50$\\
steps / batch                                              & $12{,}000/1{,}024$ & $6{,}000/1{,}024$ & $6{,}000/1{,}024$\\
\bottomrule
\end{tabular}
\caption{\textbf{Selected RP1 configurations across domains.} Per-base/per-cell
entries are listed \emph{LeWM\,/\,PLDM} for Cube and Reacher, and
\emph{LeWM$\cdot h25$\,/\,LeWM$\cdot h100$\,/\,PLDM$\cdot h25$\,/\,PLDM$\cdot h100$}
for TwoRoom; all other values are shared across bases within a domain. The actor
LR is cosine-annealed to $1/10$ of the listed value for Cube and Reacher and held
constant for TwoRoom. The offline value of Sec.~\ref{sec:value-learning}
initializes the co-trained critic.}
\label{tab:rp1-hypers-all}
\end{table}

\paragraph{RP1 training.}
All domains share the actor--critic recipe of Sec.~\ref{sec:RP1-training}:
$K{=}8$ refinement iterations over the $H{=}5$-chunk plan; the co-trained
critic is initialized from the offline value, continues TD updates on cached
data for the listed number of live steps (one critic step per actor step,
its EMA with $\tau{=}0.005$ serving as the actor's teacher), and is then
frozen; the TD batch size is $1{,}024$ and the cross-episode goal
probability is $0.3$. Table~\ref{tab:rp1-hypers-all} lists every selected
per-domain setting; anything not shown there is shared across domains and
bases.

\subsection{TwoRoom}
\label{app:results-tworoom}

\paragraph{Specific setup.}
All cells use three fresh actor and critic seeds $\{0,1,2\}$, averaged over
task-seeds $\{42,43,44\}$. Success is judged by whether the final distance
to the goal is within $16$ pixels. Figure~\ref{fig:tworoom-fields} probes
the learned critic, comparing latent distance to the critic's value
landscape on sampled tasks; Fig.~\ref{fig:tworoom-trajectories} traces plan
refinement against the hand-designed planners.

\begin{figure}[H]
  \centering
  \includegraphics[width=0.8\linewidth]{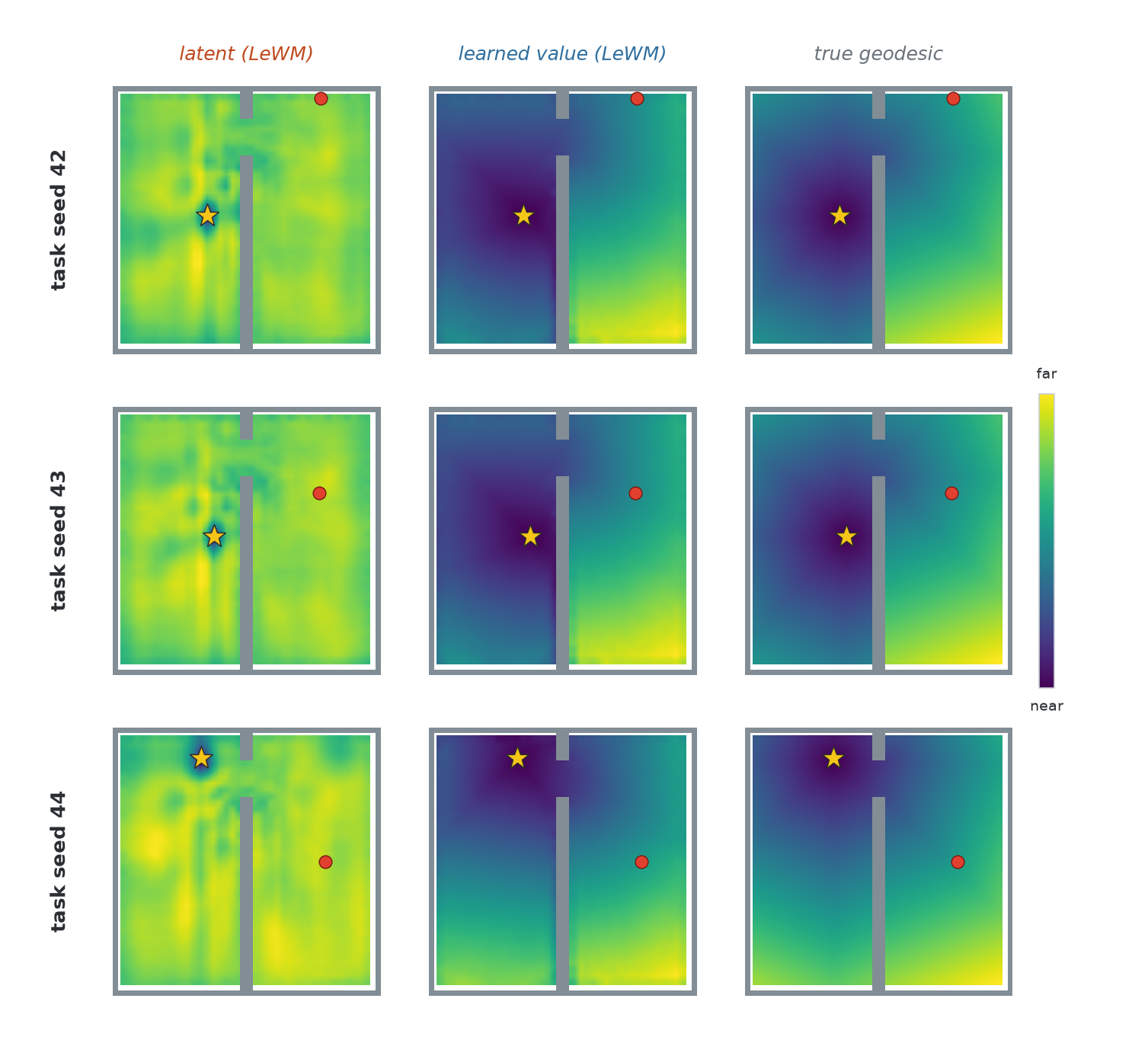}
  \caption{\textbf{Latent-Distance vs. learned Cost-to-go.} 3 randomly sampled tasks from seeds (42,43,44) and their corresponding cost landscapes.}
  \label{fig:tworoom-fields}
\end{figure}
\begin{figure}[p]
  \centering
  \includegraphics[width=0.85\linewidth]{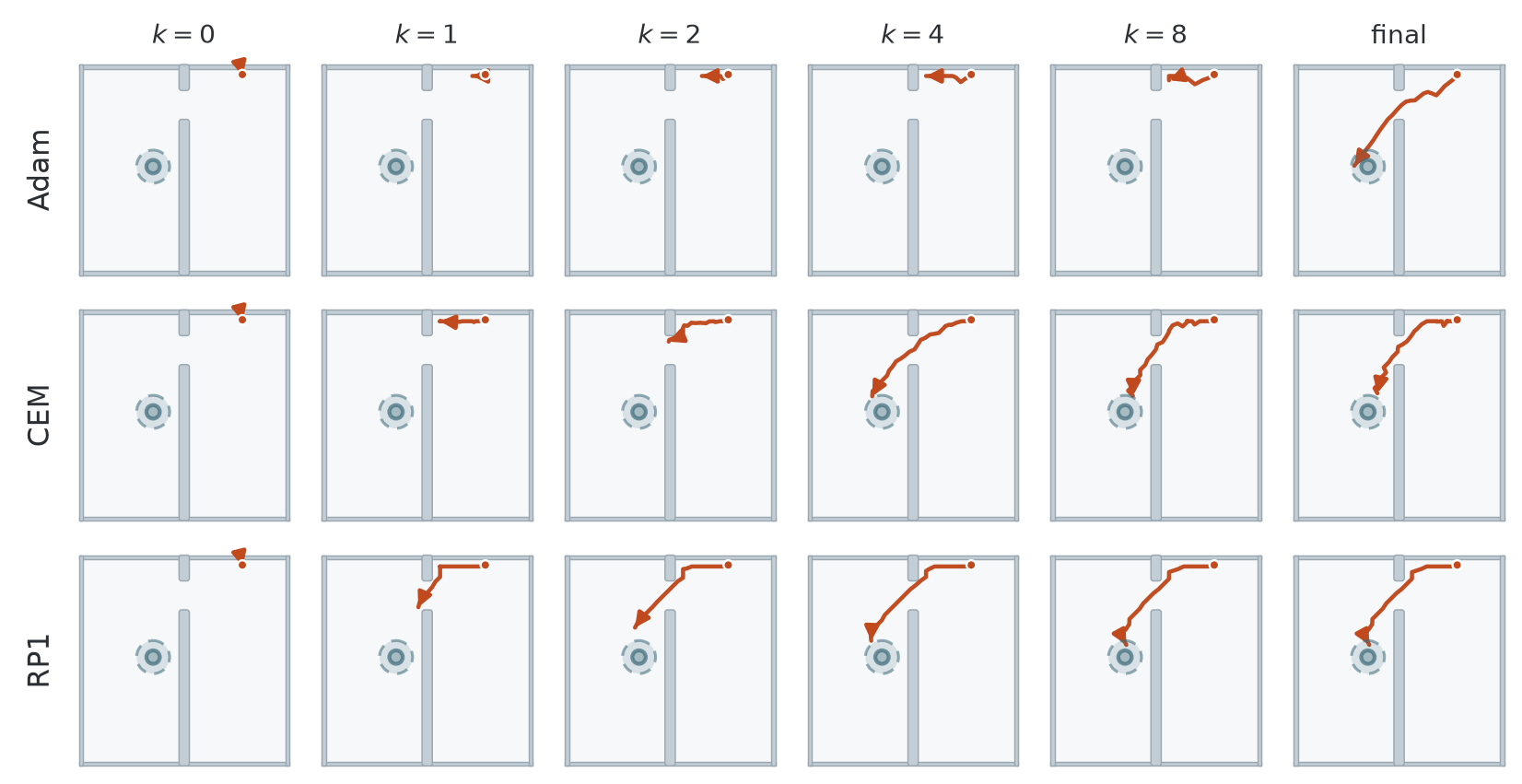}\\[8pt]
\includegraphics[width=0.85\linewidth]{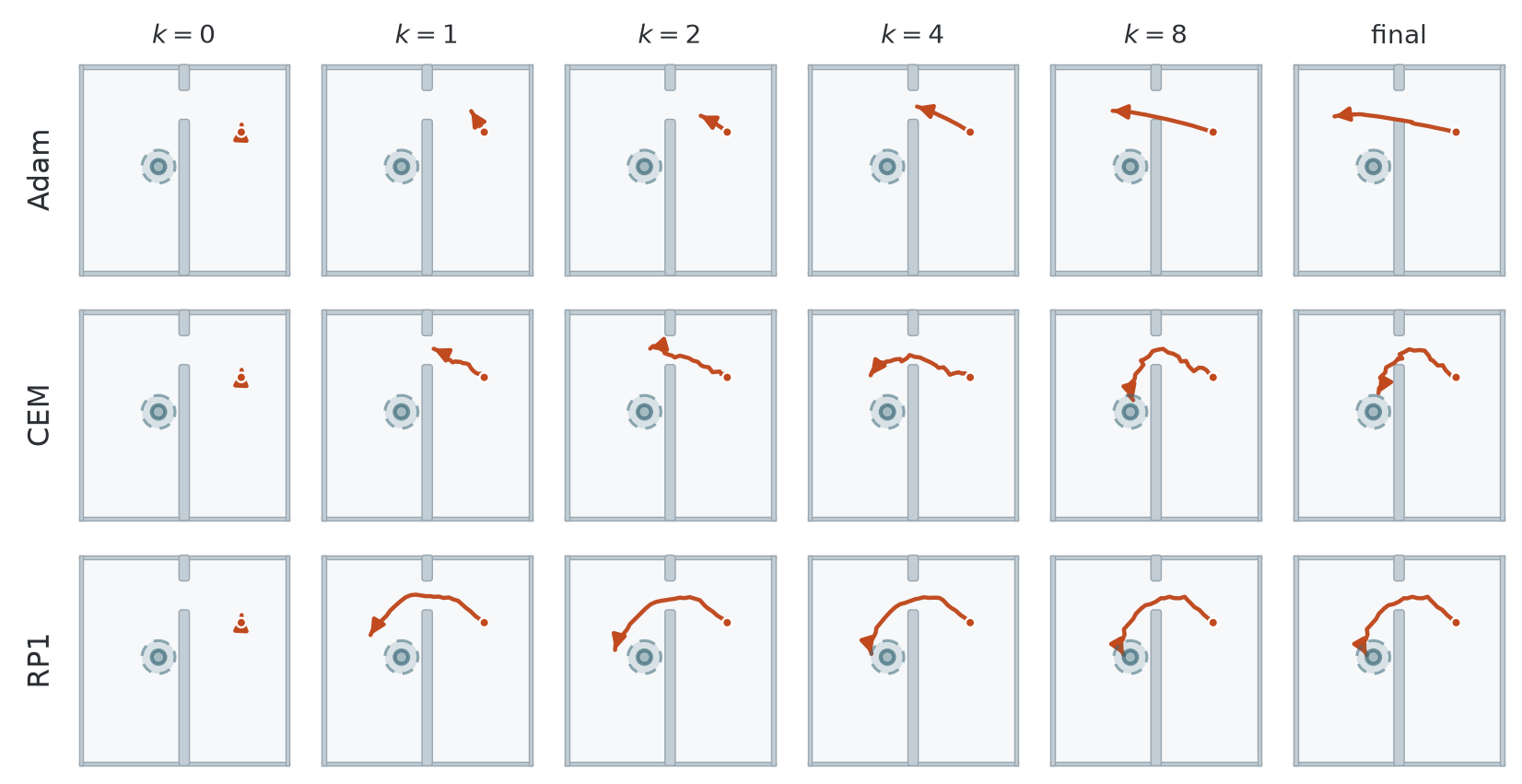}\\[8pt]
\includegraphics[width=0.85\linewidth]{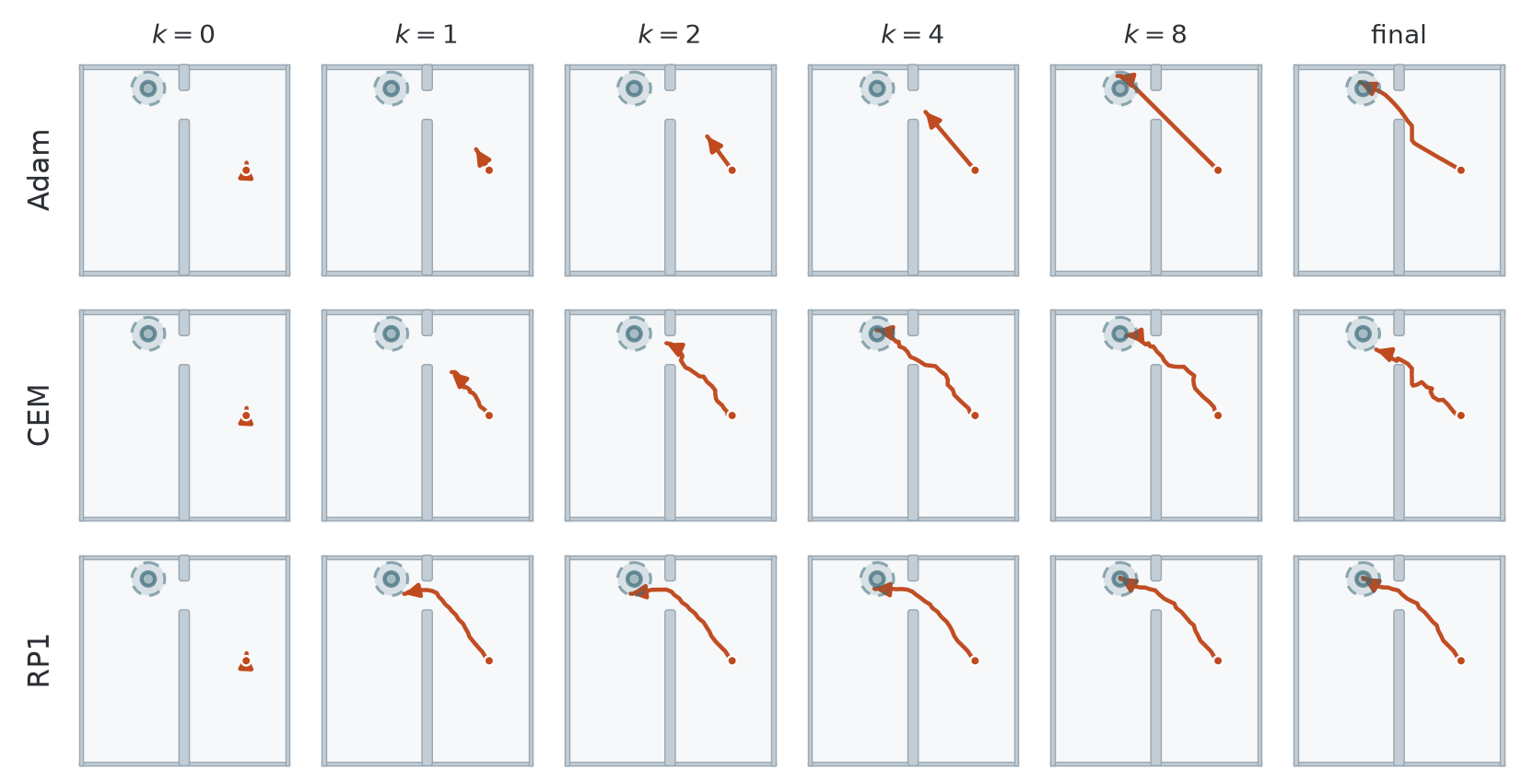}
  \caption{\textbf{Plan refinement in TwoRoom.} Each panel shows the
planner's best-scoring candidate plan at refinement iteration $k$ on the
same task, with all planners scoring plans using the learned value critic;
``final'' is $k{=}30$ for Adam and CEM and $k{=}8$ for RP1. Iterations
differ greatly in cost: one CEM iteration evaluates $300$ sampled plans
($9{,}000$ rollouts in total), one Adam iteration takes a gradient step on
$100$ plans in parallel ($3{,}000$ rollouts), whereas one RP1 iteration is
a single forward pass of the learned refiner costing one rollout ($9$ in
total, including the initial evaluation).}
  \label{fig:tworoom-trajectories}
\end{figure}
\newpage

\subsection{Reacher}
\label{app:results-reacher}

\paragraph{Specific setup.}
Reacher widens the seed protocol: we report on evaluation draws
$\{42,\ldots,47\}$ averaged over six training seeds $\{0,\ldots,5\}$
($36$ evaluations per base and tolerance), and each world-model base uses a
single configuration fixed a priori. Success is first-hit: all joints
within $\tau$ radians of the goal configuration, scored in a separate
simulator pass per $\tau$ with termination on success.

\paragraph{Cost windows.}
On Reacher, both objectives read the predicted terminal state through a
latent window of $w$ terminal frames. We take $w{=}1$ as the primary
setting and report $w{=}3$ as an ablation
(Tab.~\ref{tab:reacher-windows-full}). The Reacher value critic
additionally uses window lag $5$ and standardized latents, and the RP1
co-trained critic is initialized from the offline value trained at the
matching cost window (single-frame for the primary $w{=}1$ result).
Widening $w$ from $1$ to $3$ lets the cost read first-order (velocity)
information, which we expect to sharpen the estimate, most visibly at the
tight $\tau{=}0.05$ tolerance.

\begin{table}[H]
\centering
\small
\renewcommand{\arraystretch}{1.08}
\begin{minipage}[t]{0.49\textwidth}
\centering
\footnotesize
\begin{tabular}{@{}lrrrrr@{}}
\multicolumn{6}{c}{\textbf{(a) single-frame costs}}\\[2pt]
\toprule
& & \multicolumn{2}{c}{LeWM} & \multicolumn{2}{c}{PLDM} \\
\cmidrule(lr){3-4}\cmidrule(l){5-6}
planner & roll. & $\tau{=}.1$ & $\tau{=}.05$ & $\tau{=}.1$ & $\tau{=}.05$ \\
\midrule
\multicolumn{6}{@{}l}{\emph{latent objective}}\\
CEM  & 9k & \textbf{98.7} & 80.3 & 96.7 & 80.0 \\
MPPI & 9k & 63.7 & 39.3 & 64.7 & 35.7 \\
Adam & 3k & 94.0 & 66.0 & 94.3 & 66.0 \\
\multicolumn{6}{@{}l}{\emph{value objective}}\\
CEM  & 9k & 97.3 & 82.0 & 96.0 & 76.0 \\
MPPI & 9k & 74.0 & 42.0 & 60.0 & 38.7 \\
Adam & 3k & 88.0 & 64.7 & 92.7 & 66.7 \\
\midrule
RP1$^{\dagger}$ & 9 & \textbf{98.7} & \textbf{88.7} & \textbf{97.8} & \textbf{82.0} \\
\bottomrule
\end{tabular}
\end{minipage}\hfill
\begin{minipage}[t]{0.49\textwidth}
\centering
\footnotesize
\begin{tabular}{@{}lrrrrr@{}}
\multicolumn{6}{c}{\textbf{(b) 3-frame costs}}\\[2pt]
\toprule
& & \multicolumn{2}{c}{LeWM} & \multicolumn{2}{c}{PLDM} \\
\cmidrule(lr){3-4}\cmidrule(l){5-6}
planner & roll. & $\tau{=}.1$ & $\tau{=}.05$ & $\tau{=}.1$ & $\tau{=}.05$ \\
\midrule
\multicolumn{6}{@{}l}{\emph{latent objective}}\\
CEM  & 9k & 99.0 & 94.3 & 98.3 & 89.3 \\
MPPI & 9k & 87.7 & 68.0 & 85.7 & 64.3 \\
Adam & 3k & 97.3 & 80.0 & 96.7 & 77.3 \\
\multicolumn{6}{@{}l}{\emph{value objective}}\\
CEM  & 9k & 99.3 & 89.3 & 98.3 & 84.7 \\
MPPI & 9k & 86.0 & 66.0 & 83.7 & 61.7 \\
Adam & 3k & 98.3 & 81.0 & 97.3 & 76.7 \\
\midrule
\textbf{RP1 (ours)} & 9 & \textbf{99.9} & \textbf{97.1} & \textbf{99.4} & \textbf{91.2} \\
\bottomrule
\end{tabular}
\end{minipage}
\caption{\textbf{Reacher, first-hit success (\%) by cost window}
(completes Tab.~\ref{tab:reacher}). The single-frame cost (a, our primary
setting) feeds only the terminal latent; the three-frame cost (b)
additionally feeds the two preceding latents, capturing first-order
information and, as expected, tightening success at $\tau{=}0.05$. RP1 leads
every column in both windows.}
\label{tab:reacher-windows-full}
\end{table}

\subsection{OGBench Cube}
\label{app:results-ogbench}

\paragraph{Specific setup.}
Every cell is $50$ episodes per evaluation seed. Success follows the
benchmark's cube-placement criterion; the no-op floors ($56.0$ at $h25$,
$45.3$ at $h100$) are measured by executing zero actions under the
identical protocol.

\paragraph{Value expansion.}
On Cube, value expansion is part of the selected configuration
\cite{feinberg2018modelbasedvalueestimationefficient}: the critic
bootstraps on imagined terminal states whose arrival velocity a
single-frame latent cannot represent, letting actor and critic jointly
exploit the world model.

\paragraph{Dyna iteration.} On-policy episodes are collected with the trained
(PRE) planner on $h25$ tasks from the training split (episodes
$0$--$7999$, no termination at goal), mixed $50{:}50$ with the original
data and outcome-labeled; the world model is finetuned for $2$ epochs at
LR $10^{-5}$ (epoch $1$ kept); latent caches and the TD critic are rebuilt
under the finetuned model; POST actors retrain with the unchanged recipe.
The finetuned model is reused as-is for $h100$ evaluation
(Sec.~\ref{sec:dyna}).

\subsection{Planning Baselines}
\label{sec:baseline-hyperparameters}
Each conventional planner is evaluated with two terminal objectives. The
latent-distance objective scores the predicted terminal latent by
\begin{equation}
C_{\mathrm{latent}}(\hat z_N,z_g)
=
\lVert \hat z_N-z_g\rVert_2^2.
\end{equation}
The value objective uses the goal-conditioned value trained for the
corresponding environment and world model:
\begin{equation}
C_{\mathrm{value}}(\hat z_N,z_g)
=
V_\psi(\hat z_N,z_g).
\end{equation}
All baselines plan in the same normalized $5$-chunk action space as RP1 and
follow the identical receding-horizon protocol; they differ only in how the
action sequence is optimized. Per decision, CEM and MPPI evaluate $9{,}000$
forward rollouts; Adam evaluates $3{,}000$ forward rollouts and the
corresponding $3{,}000$ backward passes.

\paragraph{Cross Entropy Method (CEM).}
CEM samples complete action sequences from a factorized Gaussian, retains the
lowest-cost elite set, and refits the sampling distribution after every
iteration. We use $300$ samples per iteration for $30$ iterations with an
elite set of $30$ (top $10\%$); the initial distribution is zero-mean with
unit variance in the normalized action space.
\paragraph{Model-Predictive Path-Integral (MPPI).}
MPPI samples Gaussian perturbations around the current action sequence and
updates the sequence using exponentially weighted trajectory costs. We use
$300$ samples per iteration for $30$ iterations with temperature
$\lambda=0.5$.
\paragraph{Adam.}
Adam directly differentiates the terminal objective through the frozen
world-model rollout and optimizes a batch of action sequences. We optimize
$300$ sequences in parallel for $10$ steps with AdamW at learning rate
$0.1$ and execute the lowest-cost sequence. For TwoRoom we optimized $100$ sequences in parallel at $30$ steps.
\paragraph{Deep Model-Predictive Optimization (DMPO).}
DMPO keeps the MPPI update and learns a residual on it: a network reads the
sampling distribution and the $N$ rollout costs (no state, no gradient) and emits
a gated mean correction, a covariance update, and a learned warm-start shift
\citep{sacks2024deep}. The paper trains this online with PPO; we instead train the
same networks offline by pathwise gradients through the frozen world model against
the critic $V_\psi$. Budget: $256$ rollouts per decision ($256{\times}1$).
\paragraph{Learning-to-Optimize MPC (L2O-MPC).}
L2O-MPC, DMPO's predecessor, learns the \emph{whole} sampling update rather than a
residual: a network reads the mean, covariance, and $N$ costs and emits a gated
replacement mean \citep{sacks2022learning}. As it is not a working optimizer
untrained, it is trained by DAgger imitation of a larger-budget MPPI expert
(computed here through the frozen world model and critic $V_\psi$). Budget: $256$
rollouts per decision ($64{\times}4$).
\end{document}